\documentclass[]{article}
\usepackage{amsfonts,amssymb,amsmath,epsfig,amsthm}
\usepackage{fullpage}
\usepackage{color}
\usepackage{array}

\newcommand{\be}{\begin{equation}}
\newcommand{\ee}{\end{equation}}
\newcommand{\ba}{\left[ \begin{array}}
\newcommand{\ea}{\end{array} \right]}
\newcommand{\bea}{\begin{eqnarray}}
\newcommand{\eea}{\end{eqnarray}}
\newcommand{\bc}{\begin{cases}}
\newcommand{\ec}{\end{cases}}

\def\real{\mathbb{R}}
\def\X{{\bf{X}}}

\def\w{\omega}
\def\hw{{\widehat\w}}
\def\ww{\tilde\w}

\def\I{\mathcal{I}}
\def\V{{v}}

\def\gw{\tilde{g}}
\def\Xw{\tilde{X}}
\def\Rw{\tilde{R}}
\def\Tw{\tilde{T}}
\def\1{^{\prime}}

\def\g{g}
\def\inv{^{-1}}
\def\RR{\mathbb{R}}
\def\s{\sigma}
\def\sit{\s_{i}}
\def\dX{\delta X}
\def\subs{\subset}

\def\SO{\operatorname{SO}}
\def\SE{\operatorname{SE}}
\def\so{\mathfrak{so}}
\def\se{\mathfrak{se}}
\def\Aa{\mathcal{A}}
\def\ignore#1{}
\def\imu{_\mathrm{imu}}
\def\m{{m}}
\def\M{{M}}
\newtheorem{defn}{Definition}

\newtheorem{lemma}{Lemma}

\newtheorem{claim}{Claim}

\def\cut#1{{}}
\def\margincut#1{{}}

\begin{document}

\title{\bf Observability, Identifiability and Sensitivity \\ of Vision-Aided Inertial Navigation} 
\author{Joshua Hernandez \and Konstantine Tsotsos \and Stefano Soatto}
\date{UCLA Technical Report UCLA CSD13022\\ August 20, 2013; revised May 10, 2014, September 20, 2014}

\maketitle

\begin{abstract}
We analyze the observability of 3-D pose from the fusion of visual and inertial sensors. Because the model contains unknown parameters, such as sensor biases, the problem is usually cast as a mixed filtering/identification, with the resulting observability analysis providing necessary conditions for convergence to a unique point estimate. Most models treat sensor bias rates as ``noise,'' independent of other states, including biases themselves, an assumption that is patently violated in practice. We show that, when this assumption is lifted, the resulting model is not observable, and therefore existing analyses cannot be used to conclude that the set of states that are indistinguishable from the measurements is a singleton. In other words, the resulting model is not observable. We therefore re-cast the analysis as one of sensitivity: Rather than attempting to prove that the set of indistinguishable trajectories is a singleton, we derive bounds on its volume, as a function of characteristics of the sensor and other sufficient excitation conditions. This provides an explicit characterization of the indistinguishable set that can be used for analysis and validation purposes.
\end{abstract}

\section{Introduction}

\footnotetext[1]{The authors are with the UCLA Vision Lab, University of California, Los Angeles, USA. Email: {\{jheez,ktsotsos,soatto\}@ucla.edu}.}

We present a novel approach to the analysis of observability/identifiability of three-dimensional (3-D) pose in visually-assisted navigation, whereby inertial sensors (accelerometers and gyrometers) are used in conjunction with optical sensors (vision) to yield an estimate of the 3-D position and orientation of the sensor platform. It is customary to frame this as a filtering problem, where the time-series of positions and orientations of the sensor platform is modeled as the state trajectory of a dynamical system, that produces sensor measurements as outputs, up to some uncertainty. Observability/identifiability analysis refers to the characterization of the set of possible state trajectories that produce the same measurements, and therefore are “indistinguishable” given the outputs \cite{soatto1994observability,kellyS09,mourikisR07,jonesS07,martinelli2014foundations}. 

The parameters in the model are either treated as unknown constants (e.g., calibration parameters) or as random processes (e.g., accelerometer and gyro biases) and included in the state of the model, which is then driven by some kind of {\em uninformative} (``noise'') input. Because noise does not affect the observability of a model\cut{ (by assumption, it is uninformative of the state)}, for the purpose of analysis it is usually set to zero. However, the input to the model of accelerometer and gyro bias is typically {\em small} but {\em not independent} of the state.\cut{ In fact, far from being uninformative, it is strongly correlated with it.} Thus, it should be treated as an {\em unknown input}, which is known to be ``small'' in some sense, rather than ``noise.'' 

{\em Our first contribution is to show that while (a prototypical model of) assisted navigation is {\em observable} in the absence of unknown inputs, it is {\em not} observable when unknown inputs are taken into account. }

{\em Our second contribution} is to reframe observability as a {\em sensitivity} analysis, and to show that while the set of indistinguishable trajectories is {\em not} a singleton (as it would be if the model was observable), it is nevertheless bounded. {\em We explicitly characterize this set and bound its volume} as a function of the characteristics of the inputs, which include sensor characteristics (bias rates) and the motion undergone by the platform (sufficient excitation).
\cut{
Rather than study observability of linearized system, or algebraically checking rank conditions that offer little insight on the structure of the indistinguishable states, we characterize observability directly in terms of indistinguishable sets.
}
\subsection*{Related work}

In addition to the above-referenced work on visual-inertial observability, our work relates to general unknown-input observability of linear time-invariant systems addressed in \cite{basile1969observability,hamano1983unknown}, for affine systems \cite{hammouri2010unknown}, and non-linear systems in \cite{dimassi2010robust,liberzon2012invertibility,bezzaoucha2011unknown}. The literature on robust filtering and robust identification is relevant, if the unknown input is treated as a disturbance. However, the form of the models involved in aided navigation does not fit in the classes treated in the literature above, which motivates our analysis. The model we employ includes alignment parameters for the (unknown) pose of the inertial sensor relative to the camera.

\subsection{Notation}

We adopt the notation of \cite{murrayLS94}, where a reference frame is represented by an orthogonal $3\times 3$ positive-determinant (rotation) matrix $R \in \SO(3) \doteq \{ R \in \real^{3\times 3} \ | \ R^T R = R R^T = I, \ {\rm det}(R) = +1\}$ and a translation vector $T \in \real^3$. They are collectively indicated by $g = (R, T) \in \SE(3)$. When $g$ represents the change of coordinates from a reference frame ``$a$'' to another (``$b$''), it is indicated by $g_{ba}$. Then the columns of $R_{ba}$ are the coordinate axes of $a$ relative to the reference frame $b$, and $T_{ba}$ is the origin of $a$ in the reference frame $b$. If $p_a$ is a point relative to the reference frame $a$, then its representation relative to $b$ is $p_b = g_{ba} p_a$. In coordinates, if $X_a$ are the coordinates of $p_a$, then $X_b = R_{ba}X_a + T_{ba}$ are the coordinates of $p_b$. 

A time-varying pose is indicated with $g(t) = (R(t), T(t))$ or $g_t = (R_t, T_t)$, and the entire trajectory from an initial time $t_i$ and a final time $t_f$ $\{g(t) \}_{t = t_i}^{t_f}$ is indicated in short-hand notation with $g_{t_i}^{t_f}$; when the initial time is $t_0 = 0$, we omit the subscript and call $g^{t}$ the trajectory ``up to time $t$''. The time-index is sometimes omitted for simplicity of notation when it is clear from the context.

We indicate with $\widehat V = (\widehat \w, v) \in \se(3)$ the (generalized) velocity or ``twist'', where $\widehat \w$ is a skew-symmetric matrix $\widehat \w \in \so(3) \doteq \{S \in \real^{3\times 3} \ | \ S^T = -S\}$ corresponding to the cross product with the vector $\w \in \real^3$, so that $\widehat \w v = \w \times v$ for any vector $v\in \real^3$. We indicate the generalized velocity with $V = (\w, v)$. We indicate the group composition $g_1 \circ g_2$ simply as $g_1 g_2$. 
In homogeneous coordinates, $\bar X_b = G_{ba} \bar X_a$ where $\bar X^T = [X^T \ 1]$ and 
\begin{equation}
G \doteq \ba{cc} R & T \\ 0 & 1 \ea \in \real^{4\times 4}
~~~~
\hat V \doteq \ba{cc} \widehat \w & v \\ 0 & 0 \ea.\nonumber
\end{equation}
Composition of rigid motions is then represented by matrix product.

\subsection{Mechanization Equations}

The motion of a sensor platform is represented as the time-varying pose $g_{sb}$ of the body relative to the spatial frame. To relate this to measurements of an inertial measurement unit (IMU) we compute the temporal derivatives of $g_{sb}$, which yield the (generalized) body velocity $V_{sb}^b$, defined by $\dot g_{sb}(t) = g_{sb}(t) {\widehat V}^b_{sb}(t)$, which can be broken down into the rotational and translational components $\dot R_{sb}(t) = R_{sb}(t) \widehat{\w}_{sb}^b(t)$ and $\dot T_{sb}(t) = R_{sb}(t) v_{sb}^b(t)$. An ideal gyrometer (gyro) would measure $\w\imu  = \w_{sb}^b$. The translational component of body velocity, $v_{sb}^b$, can be obtained from the last column of the matrix $\frac{d}{dt} {\widehat V}^b_{sb}(t)$. That is, $ \dot{v}_{sb}^b = \dot{R_{sb}^T}\dot T_{sb} + R_{sb}^T \ddot T_{sb} = - \widehat{\w}_{sb}^b v_{sb}^b + R_{sb}^T \ddot T_{sb} \doteq - \widehat{\w}_{sb}^b v_{sb}^b + \alpha_{sb}^b $, which serves to define $\alpha_{sb}^b \doteq R_{sb}^T \ddot T_{sb}
$. These equations can be simplified by defining a new linear velocity, $v_{sb}$, which is neither the body velocity $v_{sb}^b$ nor the spatial velocity $v_{sb}^s$, but instead $v_{sb} \doteq R_{sb}v_{sb}^b$. Consequently, we have that $ \dot T_{sb}(t) = v_{sb}(t) $ and $ \dot v_{sb}(t) = \dot R_{sb} v_{sb}^b + R_{sb} \dot{v}_{sb}^b = \ddot T_{sb} \doteq \alpha_{sb}(t) $ where the last equation serves to define the new linear acceleration $\alpha_{sb}$; as one can easily verify, we have that $ \alpha_{sb} = R_{sb} \alpha_{sb}^b.$ An ideal accelerometer (accel) would then measure  $ \alpha\imu  = R_{sb}^T(t) (\alpha_{sb}(t) - \gamma)$ where $\gamma \in \real^3$ is the gravity vector.

There are several reference frames to be considered in a navigation scenario. The {\em spatial frame} $s$, typically attached to Earth and oriented so that gravity $\gamma$ takes the form $\gamma^T = [0 \ 0 \ 1]^T \| \gamma \|$ where $\| \gamma \|$ can be read from tabulates based on location\cut{\footnote{Although see App. \ref{app-gravity}} for when such information is unavailable.} and is typically around $9.8m/s^2$. The {\em body frame} $b$ is attached to the IMU.\cut{\footnote{In practice, the IMU has several different frames due to the fact that the gyro and accel are not co-located and aligned, and even each sensor (gyro or accel) is composed of multiple sensors, each of which can have its own reference frame. Here we will assume that the IMU has been pre-calibrated so that accel and gyro yield measurements relative to a common reference frame, the {\em body frame}. In reality, it may be necessary to calibrate the alignment between the multiple-axes sensors (non-orthogonality), as well as the gains (scale factors) of each axis.}} The {\em camera frame} $c$, relative to 
which image 
measurements are captured, is also unknown, although we will assume that {\em intrinsic calibration} has been performed, so that measurements on the image plane are provided in metric units. 

The equations of motion (known as mechanization equations) are usually described in terms of the body frame at time $t$ relative to the spatial frame $g_{sb}(t)$. Since the spatial frame is arbitrary (other than for being aligned to gravity), it is often chosen to be co-located with the body frame at time $t = 0$. To simplify the notation, we indicate this time-varying frame $g_{sb}(t)$ simply as $g$, and so for $R_{sb}, T_{sb}, \w_{sb}, v_{sb}$, thus effectively omitting the subscript $sb$ wherever it appears. This yields  $\dot T = \V, \ \dot R = R \widehat \w, \ \dot \V = \alpha, \dot \w = w, \ \dot \alpha = \xi$
where $w \in \real^3$ is the rotational acceleration, and $\xi \in \real^3$ the translational jerk (derivative of acceleration). 
\subsection{Sensor model}
Although the acceleration $\alpha$ defined above corresponds to neither body nor spatial acceleration, it is conveniently related to accelerometer measurements $\alpha\imu$: 
\begin{equation}
{\alpha\imu (t) = R^T(t) (\alpha(t)- \gamma) + \underbrace{\alpha_b(t) + n_{\alpha}(t)}}
\label{eq-accel}
\end{equation}
where the measurement error in bracket includes a slowly-varying mean (``bias'') $\alpha_b(t)$ and a residual term $n_\alpha$ that is commonly modeled as a zero-mean (its mean is captured by the bias), white, homoscedastic and Gaussian noise process. In other words, it is assumed that $n_\alpha$ is independent of $\alpha$, hence uninformative. Here $\gamma$ is the gravity vector expressed in the spatial frame.\cut{\footnote{The orientation of the body frame relative to gravity, $R_0$, is unknown, but can be approximated by keeping the IMU still (so $R^T(t) = R_0$) and averaging the accel measurements, so that $\frac{1}{T}\sum_{t=0}^T \alpha\imu (t)  \simeq  - R_0^T \gamma + \alpha_b$. Assuming the bias to be small (zero), this equation defines $R_0$ up to a rotation around gravity, which is arbitrary. Note that if $\alpha_b \neq 0$, the initial bias will affect the initial orientation estimate.}} Measurements from a gyro, $\w\imu$, can be similarly modeled as 
\begin{equation}
{\w\imu (t) = \w(t) + \underbrace{\w_b(t) + n_{\w}(t)}}
\label{eq-gyro}
\end{equation}
where the measurement error in bracket includes a slowly-varying bias $\w_b(t)$ and a residual ``noise'' $n_\w$ also assumed zero-mean, white, homoscedastic and Gaussian, independent of $\w$.

Other than the fact that the biases $\alpha_b, \w_b$ change {\em slowly}, they can change arbitrarily. One can therefore consider them an {\em unknown input} to the model, or a {\em state} in the model, in which case one has to hypothesize a dynamical model for them. For instance,
\begin{equation}
\dot \w_b(t) = w_b(t), ~~~ \dot \alpha_b(t) = \xi_b(t)
\end{equation}
for some unknown inputs $w_b, \xi_b$ that can be safely assumed to be {\em small}, but not (white, zero-mean and, most importantly) independent of the biases. Nevertheless, it is common to consider them to be realizations of a Brownian motion that is  {\em independent} of $\w_b, \alpha_b$. This is done for convenience as one can then consider all unknown inputs as ``noise.'' Unfortunately, however, this has implications on the analysis of the observability and identifiability of the resulting model.

\subsection{Model reduction} 

The mechanization equations above define a dynamical model having as output the IMU measurements. Including the initial conditions and biases, we have 
\begin{equation}
\begin{cases}
\begin{tabular}{>{$}r<{$} >{$\!\!\!\!\!}l<{$} >{$}r<{$} >{$\!\!\!\!\!}l<{$}}
\dot T &= \V & T(0) &= 0 \\
\dot R &= R \widehat \w & R(0) &= R_0\\
\dot \V &= \alpha \\ 
\dot \w &= w\\ 
\dot \alpha &= \xi \\
\dot \w_b &= w_b  \\
\dot \alpha_b &= \xi_b  \\ 
\dot \gamma &= 0 \\
\end{tabular}\\
\begin{tabular}{>{$}r<{$} >{$\!\!\!\!\!}l<{$}}
\w\imu (t) &= \w(t) + \w_b(t) + n_{\w}(t) \\ 
\alpha\imu (t) &= R^T(t) (\alpha(t)- \gamma) + \alpha_b(t) + n_{\alpha}(t) 
\end{tabular}
\end{cases}
\label{eq-standard}
\end{equation}
In this standard model, data from the IMU are considered as (output) {\em measurements}. However, it is customary to treat them as (known) {\em input} to the system, by writing $\w$ in terms of $\w\imu $ and $\alpha$ in terms of $\alpha\imu $:
\begin{equation}
{\w = \w\imu  - \w_b + \underbrace{n_R}_{-n_{\w}}} ~~~~~~ {\alpha = R(\alpha\imu  - \alpha_b) + \gamma + \underbrace{ n_\V}_{- Rn_\alpha}}
\end{equation}
This equality is valid for {\em samples} (realizations) of the stochastic processes involved, but it can be misleading as, if considered as stochastic processes, the noises above are {\em not} independent of the states. Such a dependency is nevertheless typically neglected. The resulting mechanization model is
\begin{equation}
\begin{cases}
\begin{tabular}{>{$}r<{$} >{$\!\!\!\!\!}l<{$} >{$}r<{$} >{$\!\!\!\!\!}l<{$}}
\dot T &= \V &T(0) &= 0 \\
\dot R &= R (\widehat \w\imu  - \widehat \w_b) + n_{R} &R(0) &= R_0\\
\dot \V &= R(\alpha\imu  - \alpha_b) + \gamma + n_\V \\ 
\dot \w_b &= w_b \\
\dot \alpha_b &= \xi_b. 
\end{tabular}
\end{cases}
\label{eq-model-dyn}
\end{equation}
\subsection{Imaging model and alignment}
Initially we assume there is a collection of points $X^i, \ i = 1, \dots, N$, visible from time $t=0$ to the current time $t$. If $\pi:\real^3 \rightarrow \real^2; X \mapsto [X_1/X_3, \ X_2/X_3]$ is a canonical central (perspective) projection, assuming that the camera is {\em calibrated},\footnote{Intrinsic calibration parameters are known and compensated for.} {\em aligned},\footnote{The pose of the camera relative to the IMU is known and compensated for.} and that the spatial frame coincides with the body frame at time $0$, we have
\begin{equation}
y^i(t) = {\color{black} \frac{R^T_{1:2}(t) (X^i - T_{1:2}(t))}{R^T_{3}(t)( X^i - T_3(t))}} \doteq  \pi(g^{-1}(t)X^i) + n^i(t)
\label{eq-y}
\end{equation}
If the feature first appears at time $t = 0$ {\em and if the camera reference frame is chosen to be the origin the world reference frame} so that $T(0) = 0; R(0) = I$, then we have that $y^i(0) = \pi(X^i) +n^i(0)$, and therefore
\begin{equation}
{X^i = \bar y^i(0)Z^i + \tilde n^i}
\label{eq-trian}
\end{equation}
where $\bar y$ is the homogeneous coordinate of $y$, $\bar y = [y^T \ 1]^T$, and $\tilde n^i = [{n^i}^T(0)Z^i \ \  0]^T$. Here $Z^i$ is the (unknown, scalar) depth of the point at time $t = 0$, and again the dependency of the noise on the state is neglected. With an abuse of notation, we write the map that collectively projects all points to their corresponding locations on the image plane as $y(t) = \pi(g^{-1}(t) \X) + n(t)$, or:
\begin{equation}
y(t) \doteq \ba{c}
y^1 \\ y^2 \\ \vdots \\ y^N \ea (t)
 = \ba{c}
\pi(R^T(X^1 - T)) \\
\pi(R^T(X^2- T)) \\
\vdots\\
\pi(R^T( X^N- T))
\ea
+ \ba{c}
n^1(t) \\ n^2(t) \\ \vdots \\ n^N(t) \ea 
\label{eq-vis}
\end{equation}


In practice, the measurements $y(t)$ are known only up to a transformation $g_{cb}$  mapping the body frame to the camera, often referred to as ``alignment'': 
\begin{equation}
{y^i(t) = \pi\left( g_{cb} g^{-1}(t) X_s^i \right) + n^i(t) \in \real^2}
\end{equation}
We can then, as done for the points $X^i$, add it to the state with trivial dynamics $\dot g_{cb} = 0$.


It may be convenient in some cases to represent the points $X_s^i$ in the reference frame where they first appear, say at time $t_i$, rather than in the spatial frame. This is because the uncertainty is highly structured in the frame where they first appear: if $X^i(t_i) = \bar y^i(t_i) Z^i(t_i)$, then $y^i(t_i)$ has the same uncertainty of the feature detector (small and isotropic on the image plane) and $Z^i$ has a large uncertainty, but it is constrained to be positive. 

However, to relate $X^i(t_i)$ to the state, we must bring it to the spatial frame, via $g(t_i)$, which is unknown. Although we may have a good approximation of it, the current estimate of the state $\hat g(t_i)$, the pose when the point first appears should be estimated along with the coordinates of the points. Therefore, we can represent $X^i$ using $y^i(t_i)$, $Z^i(t_i)$ {\em and} $g(t_i)$: 
\begin{equation}
X_s^i = X_s^i(g_{t_i}, y_{t_i}, Z_{t_i}) = g_{t_i} \bar y_{t_i} Z_{t_i}
\end{equation}
Clearly this is an over-parametrization, since each point is now represented by $3+6$ parameters instead of $3$. However, the pose $g_{t_i}$ can be pooled among all points that appear at time $t_i$, considered therefore as a {\em group}. At each time, there may be a number $j = 1, \dots, K(t)$ groups, each of which has a number $i = 1, \dots, N_j(t)$ points. We indicate the group index with $j$ and the point index with $i = i(j)$, omitting the dependency on $j$ for simplicity. The representation of $X_s^i$ then evolves according to
\begin{equation}
\begin{cases}
\begin{tabular}{>{$}r<{$} >{$\!\!\!\!\!}l<{$}}
\dot y^i_{t_i} &= 0, ~~~ i = 1, \dots, N(j) \\
\dot Z^i_{t_i} &= 0 \\
\dot g_{j} &= 0, ~~~~ j = 1, \dots, K(t).
\end{tabular}
\end{cases}
\label{eq-groups}
\end{equation}

\section{Analysis of the model} 
\label{sect-analysis}

The goal here is to exploit imaging and inertial measurements to infer the sensor platform trajectory. For this problem to be well-posed, a (sufficiently exciting) realization of $\w\imu, \alpha\imu$ and $y$ should constrain the set of trajectories that satisfy \eqref{eq-model-dyn}-\eqref{eq-groups} to be unique. If there are different trajectories that satisfy \eqref{eq-standard} with the same outputs, they are {\em indistinguishable}. If the set of indistinguishable trajectories is a singleton (contains only one element, presumably the ``true'' trajectory), the model \eqref{eq-standard} is {\em observable}, and one may be able to retrieve a unique point-estimate of the state using a filter, or observer.

While it is commonly accepted that the model \eqref{eq-standard} or its equivalent reduced realization, is observable, this is the case only when {\em biases are exactly constant.} But if biases are allowed to change, however slowly, the observability analysis conducted thus far cannot be used to conclude that the indistinguishable set is a singleton. Indeed, we show that this is the not the case, by computing the indistinguishable set explicitly. The following claim\cut{, first stated in \cite{hernandezS13}} is proven in \cite{hernandezS13}. 
\begin{claim}[Indistinguishable Trajectories]\label{claim-five}
Let $g(t)= (R(t), T(t)) \in \SE(3)$ satisfy \eqref{eq-model-dyn}-\eqref{eq-groups}
for some known constant $\gamma$ and functions $\alpha\imu (t)$, $\w\imu (t)$ and for some unknown functions $\alpha_b(t), \w_b(t)$ that are constrained to have $\| \dot \alpha_b(t) \| \le \epsilon$, $\| \dot \w_b(t) \| \le \epsilon$, and $\|\ddot\w_b(t)\|\le\epsilon$ at all $t$,
for some $\epsilon<1$.

Suppose $\gw(t) \doteq \sigma(g_B g(t) g_A)$ for some  constant $g_A = (R_A, T_A)$, $g_B = (R_B, T_B)$, $\sigma > 0$,
with bounds on the configuration space such that\footnote{Here $\sigma(g)$ is a scaled rigid motion: if $g = (R, T)$, then $\sigma(g) =  (R, \sigma T)$.}
$\|T_A\|\leq M_A$ and $0<m_\sigma\leq|\sigma|\leq M_\sigma$. 
Then, under sufficient excitation conditions,
$\gw(t)$ satisfies (\ref{eq-model-dyn})-\eqref{eq-groups} if and only if 
\begin{gather}
\| I - R_A \|  \leq  \frac{2{\epsilon}}{\m(\dot{\w}\imu\!:\!{\RR^+})}  \label{constraint1}\\
|\sigma - 1|  \le \frac{k_{1}\epsilon + M_\sigma\|I-R_A\|}{\m(\dot\alpha\imu\!:\!{\I_{1}})} \label{constraint2}\\
\|T_A\|\leq \frac{\epsilon(k_{2}+(2M_\sigma+1)M_A)}{m_{\sigma}\,\m(\ddot\w\imu\!:\!{\I_{2}})}\label{constraint3} \\
\|(1-R_B^T)\gamma\|\leq  \frac{\epsilon(k_{3} + M_\sigma M_A)}
{m_{\sigma}\,\m(\w\imu-\w_b\!:\!\I_{3})} + \nonumber \\ + \frac{(|\sigma-1|+\epsilon)\M(\w\imu-\w_b\!:\!\I_{3})\|\gamma\|}
{m_{\sigma}\,\m(\w\imu-\w_b\!:\!\I_{3})} \label{constraint4}
\end{gather}
for $\I_i$ and $k_i$ determined by the sufficient excitation conditions.
\end{claim}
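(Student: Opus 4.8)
The plan is to compute, for the candidate trajectory $\gw=\sigma(g_B g g_A)$, the \emph{unique} biases it must carry in order to satisfy \eqref{eq-model-dyn} with the \emph{same} inertial inputs and visual outputs, and then to read \eqref{constraint1}--\eqref{constraint4} off the requirement that those biases still be ``slowly varying'' in the sense of the hypothesis.

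\emph{Step 1 (kinematics).} Substitute $\gw=\sigma(g_B g g_A)$ into \eqref{eq-model-dyn} (with the noise terms dropped, as in observability analysis). Writing $\Rw=R_B R R_A$ and $\Tw=\sigma(R_B R T_A+R_B T+T_B)$, differentiating, and using $\dot R=R(\widehat\w\imu-\widehat\w_b)$, the conjugation identity $R_A^T\widehat u R_A=\widehat{R_A^T u}$, and the fact that $\sigma(\cdot)$ respects composition and inversion, one finds that $\gw$ solves \eqref{eq-model-dyn} (for the given $\w\imu,\alpha\imu,\gamma$) precisely for the gyro bias $\widetilde\w_b=(I-R_A^T)\w\imu+R_A^T\w_b$ and, with $\w\doteq\w\imu-\w_b$, the accelerometer bias
\begin{multline*}
\widetilde\alpha_b=\alpha\imu-\sigma R_A^T(\alpha\imu-\alpha_b)\\
-\sigma R_A^T\big(\dot\w\times T_A+\w\times(\w\times T_A)\big)-R_A^T R^T(\sigma I-R_B^T)\gamma .
\end{multline*}
Matching the initial condition $\Tw(0)=0$ fixes $T_B$ in terms of $T_A$, which is why $T_B$ does not appear in the constraints. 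In parallel one checks that the visual equations \eqref{eq-vis}, \eqref{eq-groups} are preserved, by transporting the points $X^i$, their depths $Z^i$, the group poses $g_j$ and the alignment $g_{cb}$ by the matching scaled rigid motions; this is the classical $7$-DOF similarity gauge of structure-from-motion and uses only $\pi(\lambda X)=\pi(X)$. So far no freedom has been created or destroyed; the whole content of the claim is that \eqref{eq-model-dyn} additionally demands $\widetilde\w_b,\widetilde\alpha_b$ to be admissible, i.e.\ $\|\dot{\widetilde\w}_b\|\le\epsilon$, $\|\dot{\widetilde\alpha}_b\|\le\epsilon$, $\|\ddot{\widetilde\w}_b\|\le\epsilon$.

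\emph{Step 2 (peeling the smoothness bounds).} From $\dot{\widetilde\w}_b=(I-R_A^T)\dot\w\imu+R_A^T\dot\w_b$ and $\|\dot\w_b\|\le\epsilon$ one gets $\|(I-R_A^T)\dot\w\imu\|\le2\epsilon$ pointwise; an excitation condition forcing $\dot\w\imu$ to sweep $\RR^3$ (measured by $\m(\dot\w\imu\!:\!\RR^+)$) turns this into the operator-norm bound \eqref{constraint1}. Differentiating $\widetilde\alpha_b$ and splitting $I-\sigma R_A^T=(1-\sigma)I+\sigma(I-R_A^T)$ gives
\begin{multline*}
\dot{\widetilde\alpha}_b=(1-\sigma)\dot\alpha\imu+\sigma(I-R_A^T)\dot\alpha\imu+\sigma R_A^T\dot\alpha_b\\
-\sigma R_A^T\big(\ddot\w\times T_A+(\text{lower order in }\w,\dot\w)\big)\\
+R_A^T\big(\w\times R^T((\sigma-1)\gamma+(I-R_B^T)\gamma)\big),
\end{multline*}
and $\|\dot{\widetilde\alpha}_b\|\le\epsilon$ is used three times, each time isolating one principal term against $\epsilon$-order or already-bounded remainders: (i) peeling $(1-\sigma)\dot\alpha\imu$, with the correction $\sigma(I-R_A^T)\dot\alpha\imu$ controlled by \eqref{constraint1}, and dividing by $\m(\dot\alpha\imu\!:\!\I_1)$, yields \eqref{constraint2}; (ii) peeling the Euler term $\sigma R_A^T(\ddot\w\times T_A)$, with $\ddot\w\approx\ddot\w\imu$ since $\|\ddot\w_b\|\le\epsilon$, and a cross-product argument --- if $\ddot\w\imu$ visits two independent directions over $\I_2$, smallness of $\ddot\w\imu\times T_A$ forces smallness of $T_A$ --- the remaining bilinear terms together with $\|T_A\|\le M_A$ producing the $(2M_\sigma+1)M_A$ term, yields \eqref{constraint3}; (iii) peeling the gravity-coupling term, using excitation of $\w\imu-\w_b$ over $\I_3$ and again a cross-product argument to force $\|(I-R_B^T)\gamma\|$ small, with the $(\sigma-1)\gamma$ piece and the upper bound $\M(\w\imu-\w_b\!:\!\I_3)$ giving the second summand of \eqref{constraint4}. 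The inequality $\|\ddot{\widetilde\w}_b\|\le\epsilon$ likewise forces $\|(I-R_A^T)\ddot\w\imu\|\le2\epsilon$, which the excitation hypotheses are set up to accommodate. For the converse, one \emph{defines} $\gw$ and the transported quantities by the Step~1 formulas and checks that, read backwards, \eqref{constraint1}--\eqref{constraint4} make $\|\dot{\widetilde\w}_b\|,\|\dot{\widetilde\alpha}_b\|,\|\ddot{\widetilde\w}_b\|\le\epsilon$, so $\gw$ solves \eqref{eq-model-dyn}--\eqref{eq-groups} with the same outputs.

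\emph{Main obstacle.} The delicate part is Step~2. The four bounds are coupled --- \eqref{constraint2} carries $\|I-R_A\|$ on its right-hand side and \eqref{constraint4} carries $|\sigma-1|$ --- so the estimates must be produced in the order \eqref{constraint1} $\to$ \eqref{constraint2}, \eqref{constraint3} $\to$ \eqref{constraint4} to stay non-circular, and the sufficient-excitation conditions (the intervals $\I_i$, the signal measures $\m(\cdot\!:\!\I_i)$, $\M(\cdot\!:\!\I_i)$, and the constants $k_i$) must be calibrated so that each isolated principal term genuinely dominates its cross-terms; in particular the cross-product arguments behind \eqref{constraint3}--\eqref{constraint4} need the exciting signal to span at least two independent directions, which has to be built into the hypotheses.
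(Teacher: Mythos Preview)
Your strategy is the paper's strategy: compute the unique biases $\widetilde\w_b,\widetilde\alpha_b$ that $\gw$ must carry, differentiate, and read off \eqref{constraint1}--\eqref{constraint4} from $\|\dot{\widetilde\w}_b\|,\|\dot{\widetilde\alpha}_b\|\le\epsilon$. Your formulas for $\widetilde\w_b$ and $\widetilde\alpha_b$ are correct, and the order \eqref{constraint1}$\to$\eqref{constraint2}$\to$\eqref{constraint3}$\to$\eqref{constraint4} is the one the paper uses.

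One point where your description diverges from the paper and could trip you up: you write that in peeling (i)--(iii) the unwanted terms are ``$\epsilon$-order or already-bounded remainders,'' suggesting you intend to control them via the constraints already established. That is not how the paper actually closes the argument. When you peel $(1-\sigma)\dot\alpha\imu$ on $\I_1$, the $T_A$-term and the gravity term are still present and you have \emph{no} bound yet on $\|T_A\|$ or $\|(I-R_B^T)\gamma\|$ beyond the a-priori $M_A$. The paper's device is to \emph{define} each $\I_i$ as an interval on which specific motion derivatives are themselves $\le\epsilon$: for $\I_1$ one takes $\|\dot R\|,\|\ddot R\|,\|\dddot R\|\le\epsilon$, which kills the $T_A$- and $\gamma$-coupled lines (their prefactors contain $\ddot R,\dddot R,\dot R$) down to $k_1\epsilon$ using only $M_A,M_\sigma,\|\gamma\|$; for $\I_2$ one takes $\|\dot R\|,\|\ddot R\|,\|\ddot T-\gamma\|\le\epsilon$ (so $\|\dot\alpha\imu\|\le 2\epsilon$ and the $\gamma$-line is small, but $\ddot\w\imu$ is free); for $\I_3$ one takes $\|\ddot R\|,\|\dddot R\|,\|\ddot T-\gamma\|\le\epsilon$ (so $\dot R$, hence $\w\imu-\w_b$, is free). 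In other words, the ``sufficient excitation conditions'' are not only that the signal in the denominator is rich, but that the \emph{competing} motion components are quiescent of order $\epsilon$ on that interval; that is where the constants $k_i$ come from. Your ``Main obstacle'' paragraph gestures at this, but the concrete mechanism should be stated, since without it the three peelings are circular.
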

The set of indistinguishable trajectories in the limit where $\epsilon \rightarrow 0$ is parametrized by an arbitrary $T_B\in \real^3$ and $\theta \in \real$, 
\begin{equation}
\begin{cases}
\tilde T =  \exp(\widehat \gamma \theta) T + T_B   \\
\tilde R = \exp(\widehat \gamma \theta) R   \\ 
\tilde T_{t_i} = \exp(\widehat \gamma \theta) \bar T_{t_i} + T_B \\
\tilde R_{t_i} = \exp(\widehat \gamma \theta) \bar R_{t_i}  ~~~~~~~~ {\rm up \ to \ } 
{\cal O}\left( 
\frac{\| \dot \w_b \|}{\|\dot \w\imu  \|}, \frac{\| \dot \alpha_b \|}{\|\dot \alpha\imu \|},  \frac{1}{\| \gamma \|}
\right)  \\
\tilde T_{cb} = T_{cb}  \\
\tilde R_{cb} = R_{cb}   
\end{cases}
\label{eq-obs-zeroinput}
\end{equation}
If we impose that $T(0) = \tilde T(0) = 0$, then $T_B = 0$ is determined; similarly, if we impose the initial pose to be aligned with gravity (so gravity is in the form $[0 \ 0 \ \| \gamma \| ]^T)$, then $\theta = 0$. But while we can impose this condition, we cannot {\em enforce} it, since the initial condition is not a part of the state of the filter, so we cannot relate the measurements at each time $t$ directly to it.

However, if the reference can be associated to {\em constant parameters} that are part of the state of the model, it can be enforced in a consistent manner. For instance, the ambiguous set of points is
\begin{equation}
\Xw^j = g_a \bar g_i^{-1} g_i g_a^{-1} X^j,
\end{equation}
if each group $i$ contains at least $3$ non-coplanar points, it is possible to fix $\bar g_i$ by parameterizing $X^j \doteq \bar y^j_{t_i} Z^j$ and imposing three directions $y^j_{t_i} = {\tilde y}^j_{t_i} = y^j(t_i), j = 1, \dots, 3$, the measurement of these directions at time $t_i$ when they first appear. This yields $\bar g_i = g_i$ and $\Xw^j = X^j$ for that group. Note that it is necessary to impose this constraint in {\em each group}.

The residual set of indistinguishable trajectories is parameterized by {\em constants} $\theta, T_B$, that determine a Gauge transformation for the groups, that can be fixed by always fixing the pose of {\em one} of the groups. This can be done in a number of ways. For instance, if for a certain group of points indexed by $i$ we impose
\begin{equation}
R_{t_i}  = \tilde R_{t_i} = \hat R(t_i) \ {\rm and} \ T_{t_i}  = \tilde T_{t_i} = \hat T(t_i)
\end{equation}
by assigning their value to the current best estimate of pose and not including the corresponding variables in the state of the model, then we have that
\begin{equation}
\hat R(t_i) = \exp(\widehat \gamma \theta) \hat R(t_i) 
\end{equation}
and therefore $\theta = 0$; similarly, 
\begin{equation}
T_B = (I - \exp(\widehat \gamma \theta))T(t_i) = 0.
\end{equation}
 Therefore, the gauge transformation is enforced explicitly at each instant of time, as each measurement provides a constraint on the states. After the Gauge Transformation has been fixed, the model is observable in the limit $\epsilon \rightarrow 0$, and otherwise the state of an observer is related to the true one as follows: 
\begin{eqnarray}
{\Xw}^{\rm ref} &=&   (1+\tilde  \sigma)\tilde R_{cb} e^{\w_B} e^{\widehat \gamma \theta}e^{\w_A} \tilde R_{cb}^T (X^{\rm ref}-T_A) + \nonumber \\ && + (1+\tilde \sigma)( \tilde R_{cb} e^{\w_A}T_B + \tilde R_{cb} T_A + \tilde T_{cb})  \\
{\Xw}^j &=&   (1+\tilde  \sigma)\tilde R_{cb} \bar R_i \tilde R_{t_i}  \tilde R_{cb}^T (X^{j}-T_A) + \nonumber \\ && + (1+\tilde \sigma)( \tilde R_{cb} \bar R_i \tilde T_{t_i} + \tilde R_{cb} \bar T_{i} + \tilde T_{cb}) \\
\tilde T 
&=& e^{\widehat \gamma \theta} T + T_B (1+\tilde\sigma) + \nonumber \\  && + \w_Be^{\widehat \gamma \theta} T + e^{\w_B} e^{\widehat \gamma \theta}R T_A(1+\tilde \sigma) \\
\tilde R &=&  e^{\w_B}e^{\widehat \gamma \theta} R e^{\w_A}\\
\tilde T_{t_i} &=&  e^{\widehat \gamma \theta} \bar T_{i} + T_B (1+\tilde\sigma) + \nonumber  \\ && + \w_Be^{\widehat \gamma \theta} \bar T_{i} + e^{\w_B} e^{\widehat \gamma \theta}\bar R_{i} T_A(1+\tilde \sigma)\\
\tilde R_{t_i} &=&  e^{\w_B}e^{\widehat \gamma \theta} \bar R_{i} e^{\w_A}\\
\tilde T_{cb} &=&  T_{cb} + \tilde \sigma T_{cb} + R_{cb}T_A(1+ \tilde \sigma) \\
\tilde R_{cb} &=&  R_{cb}\exp(\w_A)
\end{eqnarray}
where $\omega_A, \ R_A, \ \sigma, \ T_A, \ \omega_B, \ R_B$ satisfy \eqref{constraint1}-\eqref{constraint4}, 
and $\theta$, $T_B$ are arbitrary. The groups will be defined up to an arbitrary reference frame $(\bar R_i, \bar T_i)$, except for the reference group where that transformation is fixed. Note that, as the reference group ``switches'' (when points in the reference group become occluded or otherwise disappear due to failure in the data association mechanism), a small error in pose is accumulated. This error affects the gauge transformation, not the {\em state} of the system, and therefore is not reflected in the innovation, nor in the covariance of the state estimate, that remains bounded. This is unlike \cite{mourikisR07}, where the covariance of the translation state $T_B$ and the rotation about gravity $\theta$ grows unbounded over time, possibly affecting the numerical aspects of the implementation. Notice that in the limit where $\dot \w_b = \dot \alpha_b =0$, we obtain back Eq. (\ref{eq-obs-zeroinput}). Otherwise, the equations above immediately imply the following
\begin{claim}[unknown-input observability]
\label{cor-unobservable}
The model (\ref{eq-model-dyn})-\eqref{eq-groups} is {\em not} observable, even after fixing the Gauge ambiguity, as the indistinguishable set is not a singleton, unless biases are constant ($\epsilon = 0$) or their derivative is known exactly.
\end{claim}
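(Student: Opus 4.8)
This is essentially a re-reading of Claim~\ref{claim-five} and of the explicit state relations displayed above, so the plan is to make the word ``unless'' precise in both directions, invoking the fact (established in the body and in the cited reference) that the indistinguishable set consists exactly of the trajectories $\gw=\sigma(g_B g g_A)$ for which \eqref{constraint1}--\eqref{constraint4} hold.

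\emph{Observability when $\epsilon=0$ or when $\dot\w_b,\dot\alpha_b$ are known exactly.} I would substitute $\epsilon=0$ into \eqref{constraint1}--\eqref{constraint4} and cascade the inequalities: \eqref{constraint1} forces $\|I-R_A\|\le 0$, hence $R_A=I$; feeding this into \eqref{constraint2} gives $|\sigma-1|\le 0$, hence $\sigma=1$; then \eqref{constraint3} gives $T_A=0$; and \eqref{constraint4} collapses to $\|(I-R_B^T)\gamma\|\le 0$, so $R_B^T\gamma=\gamma$ and $R_B=\exp(\widehat\gamma\theta)$ is a rotation about gravity. What survives is precisely the pair $(\theta,T_B)$ of \eqref{eq-obs-zeroinput}, i.e. the gauge orbit; fixing the gauge as in the paragraphs preceding the claim sets $\theta=0$ and $T_B=0$, and the indistinguishable set reduces to $\{g\}$. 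If instead $\dot\w_b,\dot\alpha_b$ are known exactly, the bias perturbation that produced the family $\gw=\sigma(g_B g g_A)$ can no longer be reabsorbed into an admissible bias correction, so the same collapse occurs; this is the second ``unless''.

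\emph{Non-observability when $\epsilon>0$ and the derivatives are unknown.} Here I would show that the admissible set of tuples $(\w_A,\sigma,T_A,\w_B,\theta,T_B)$ strictly contains the gauge orbit. Under the sufficient-excitation hypothesis the denominators $\m(\dot\w\imu\!:\!\RR^+)$, $\m(\dot\alpha\imu\!:\!\I_1)$, $\m(\ddot\w\imu\!:\!\I_2)$ and $\m(\w\imu-\w_b\!:\!\I_3)$ are finite, so the right-hand sides of \eqref{constraint1}--\eqref{constraint4} are strictly positive and of order $\epsilon$. One may then pick, for instance, a nonzero $\w_A$ of norm at most $2\epsilon/\m(\dot\w\imu\!:\!\RR^+)$, next $\sigma$ within the induced bound \eqref{constraint2}, then $T_A$ within \eqref{constraint3}, and finally $R_B$ within \eqref{constraint4}, obtaining an admissible tuple that is not a pure gauge transformation. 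Substituting it into the state relations listed after Claim~\ref{claim-five} yields a $\gw$ that solves \eqref{eq-model-dyn}--\eqref{eq-groups} with identical outputs yet differs from $g$ even after $\theta$ and $T_B$ have been fixed; since an entire $\mathcal{O}(\epsilon)$-neighborhood of such tuples works, the indistinguishable set has positive volume and is not a singleton.

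The step I expect to be the real obstacle is checking that this $\mathcal{O}(\epsilon)$-region is genuinely non-vacuous and internally consistent: one must verify that the chained inequalities \eqref{constraint1}--\eqref{constraint4} admit a non-trivial simultaneous solution (which is exactly where the finiteness of the excitation functionals $\m(\cdot)$ and the constants $k_i$ enter), and that distinct admissible tuples produce distinct trajectories, i.e. that the action $g\mapsto\sigma(g_B g g_A)$ is non-trivial on the given sufficiently-exciting $g$ rather than accidentally fixing it. Both hold under the stated conditions, but they are where the word ``immediately'' hides a little work.
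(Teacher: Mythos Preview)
Your proposal is correct and follows exactly the route the paper intends: the paper's own ``proof'' is the single sentence that the explicit state relations, together with the bounds \eqref{constraint1}--\eqref{constraint4} from Claim~\ref{claim-five}, ``immediately imply'' the claim, with details deferred to the cited technical report. Your cascading substitution $\epsilon=0\Rightarrow R_A=I\Rightarrow\sigma=1\Rightarrow T_A=0\Rightarrow R_B=\exp(\widehat\gamma\theta)$ is precisely how one unpacks that sentence, and your non-observability argument for $\epsilon>0$ (pick a nonzero admissible tuple inside the strictly positive bounds) is the intended reading of the ``if and only if'' in Claim~\ref{claim-five}.

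Your closing caveat is well placed: the appendix proof of Claim~\ref{claim-five} in fact establishes only the \emph{necessity} direction of the bounds, so the sufficiency you invoke---that every tuple inside \eqref{constraint1}--\eqref{constraint4} actually produces an admissible $\gw$ with biases satisfying $\|\dot{\tilde\w}_b\|,\|\dot{\tilde\alpha}_b\|\le\epsilon$---is exactly the piece being outsourced to \cite{hernandezS13}. You are right that this is where ``immediately'' hides work; the paper does not do it here either.
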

We refer the reader to \cite{hernandezS13} for proofs, which are articulated into several steps. 
In practice, once the Gauge transformations are fixed, a properly designed filter can be designed to converge to a point estimate, but there is no guarantee that such an estimate coincides with the true trajectory. Instead, the estimate can deviate from the true trajectory depending on the biases. The analysis above quantifies how far from the true trajectory the estimated one can be, provided that the estimation algorithm uses bounds on the bias drift rates and the characteristics of the motion. Often these bounds are not strictly enforced but rather modeled through the driving noise covariance. 

\section{Empirical validation}

To validate the analysis, we run repeated trials to estimate the state of the platform under different motion but identical alignment (the camera is rigidly connected to the IMU and the connection is stable to high precision). If alignment parameters were identifiable (or the augmented state observable), we would expect convergence to the same parameters across all trials. Instead, Fig. \ref{fig-one} shows that the filter reaches steady-state, with the estimates of the parameters stabilizing, but to different values at each run. 
\begin{figure}[htb]
\center
\hspace{-0.5cm}
\includegraphics[width=0.5\textwidth]{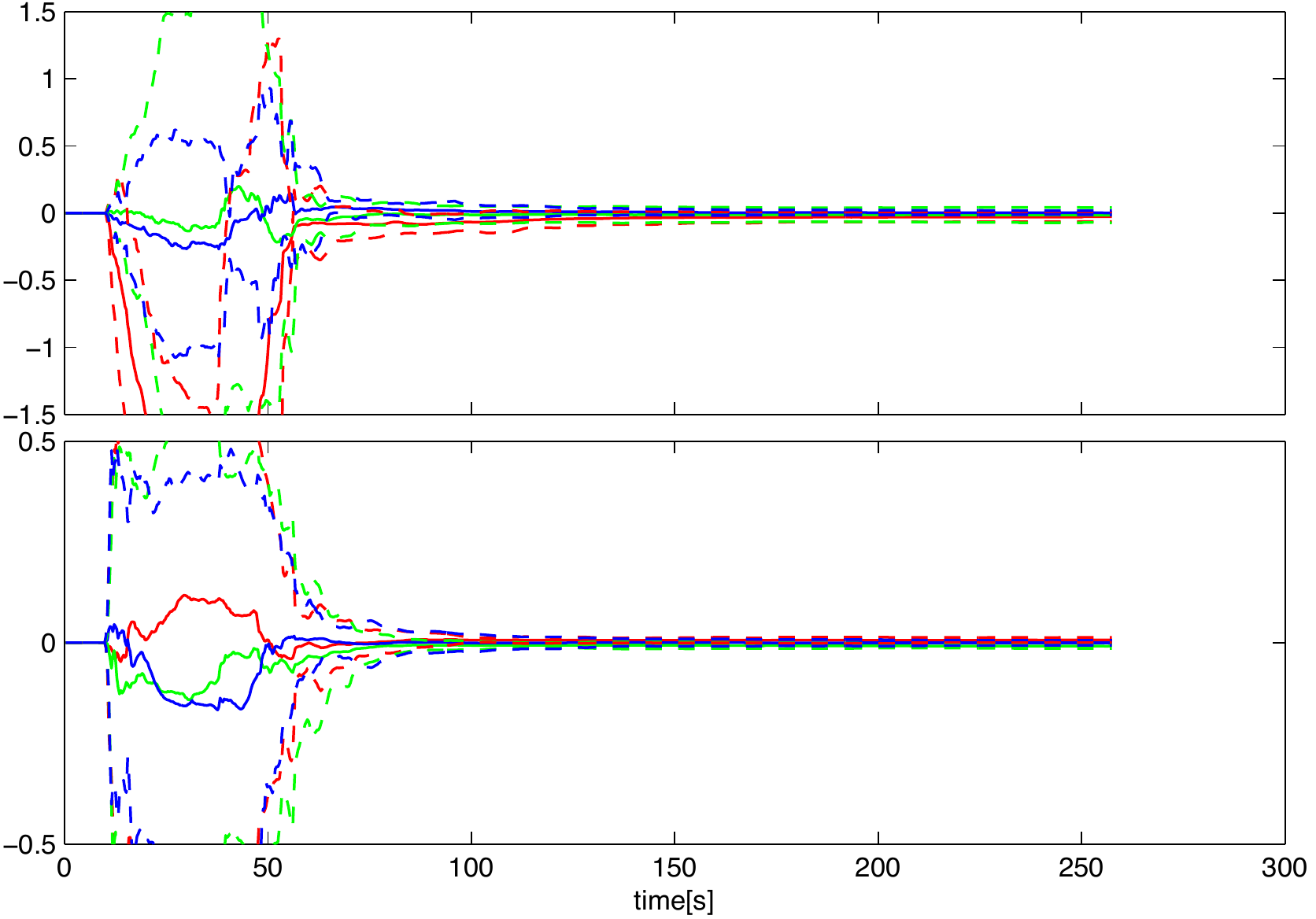}
\caption{\sl Convergence of alignment parameters (top translational, bottom rotational) to a set, rather than a unique point estimate, due to the lack of unknown-input observability in the presence of non-constant biases. The mean (solid line) and twice the standard deviation (dashed lines) of the change in estimated parameters relative to their initial nominal values across multiple trials on real data collected with our experimental framework, show that different trials converge to different parameter values, but to within a bounded set. The standard deviations of the converged translational parameters (in centimeters) are [1.76 2.8 0.77] and [0.0032 0.0029 0.0033] for the rotational parameters (in radians). }
\label{fig-one}    
\end{figure}
Nevertheless, the parameter values are in a set, whose volume can be bounded based on the analysis above and the characteristics of the sensor. In particular, less stable biases, and less exciting motions, result in a larger indistinguishable set: Fig. \ref{fig-two} shows the same experiments with more gentle (hence less exciting) motions. 
\begin{figure}[htb]
\center
\hspace{-0.5cm}
\includegraphics[width=0.5\textwidth]{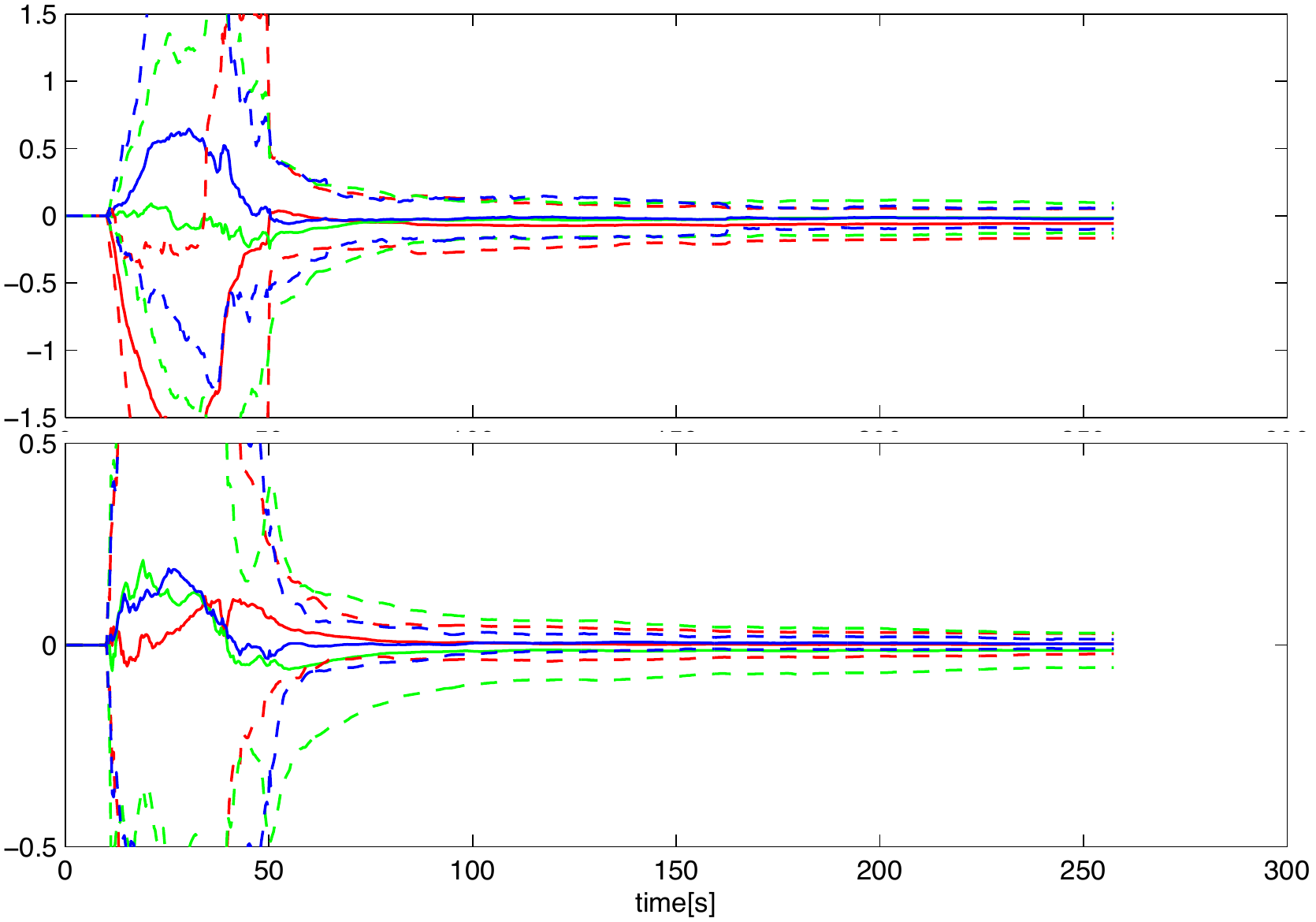}
\caption{\sl The indistinguishable set is bounded depending on the characteristic of the motion, that has to be sufficiently exciting. Gentler motion produces multiple trials that converge (top translational, bottom rotational) to a set of larger volume compared to Fig. \ref{fig-one}. The standard deviations of the converged translational parameters (in centimeters) are [4.43 5.98 3.57] and [0.0069 0.0079 0.0062] for the rotational parameters (in radians).}
  \label{fig-two}
\end{figure}
Fig. \ref{fig-three} shows the same where the accel and gyro biases have been artificially inflated by adding a slowly time-varying offset to the IMU measurements.
\begin{figure}[htb]
\center
\hspace{-0.5cm}
\includegraphics[width=0.5\textwidth]{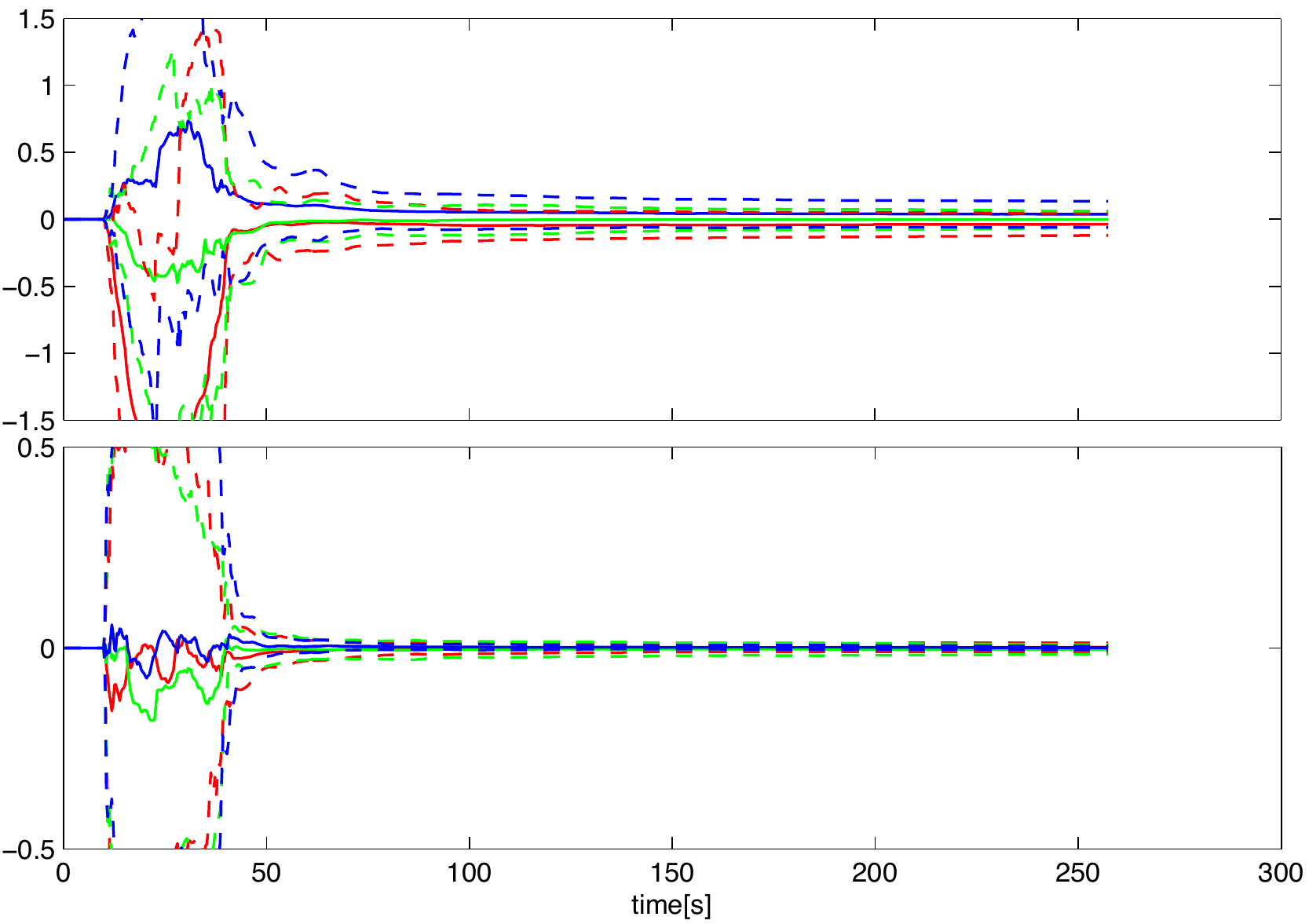}
\caption{\sl The indistinguishable set also depends on the characteristics of the sensor, and its volume is directly proportional to the sensor bias rate. Here artificial bias is added to the measurements, resulting in a larger indistinguishable set (top translational alignment, bottom rotational alignment) compared to Fig. \ref{fig-one}. The standard deviations of the converged translational parameters (in centimeters) are [4.09 3.1 4.88] and [0.0053 0.0061 0.002] for the rotational parameters (in radians).}
\label{fig-three}
\end{figure}
To further support the conclusions of the analysis, Monte-Carlo experiments were conducted on the model in simulation using stationary and time-varying biases while undergoing sufficiently exciting motion. For each trial, the platform views a consistent set of randomly generated points (no occlusions) while circling the point set on randomly generated trajectories. Figures \ref{fig-mc-statbias} and \ref{fig-mc-floatbias} show the resulting estimation errors of the alignment states for 20 trials each using a constant and white-noise driven bias respectively. As seen in the experiments with real data, estimates in the time-varying bias scenario do not converge to a singleton. 
\begin{figure}[htb]
\center
\hspace{-0.5cm}
\includegraphics[width=0.5\textwidth]{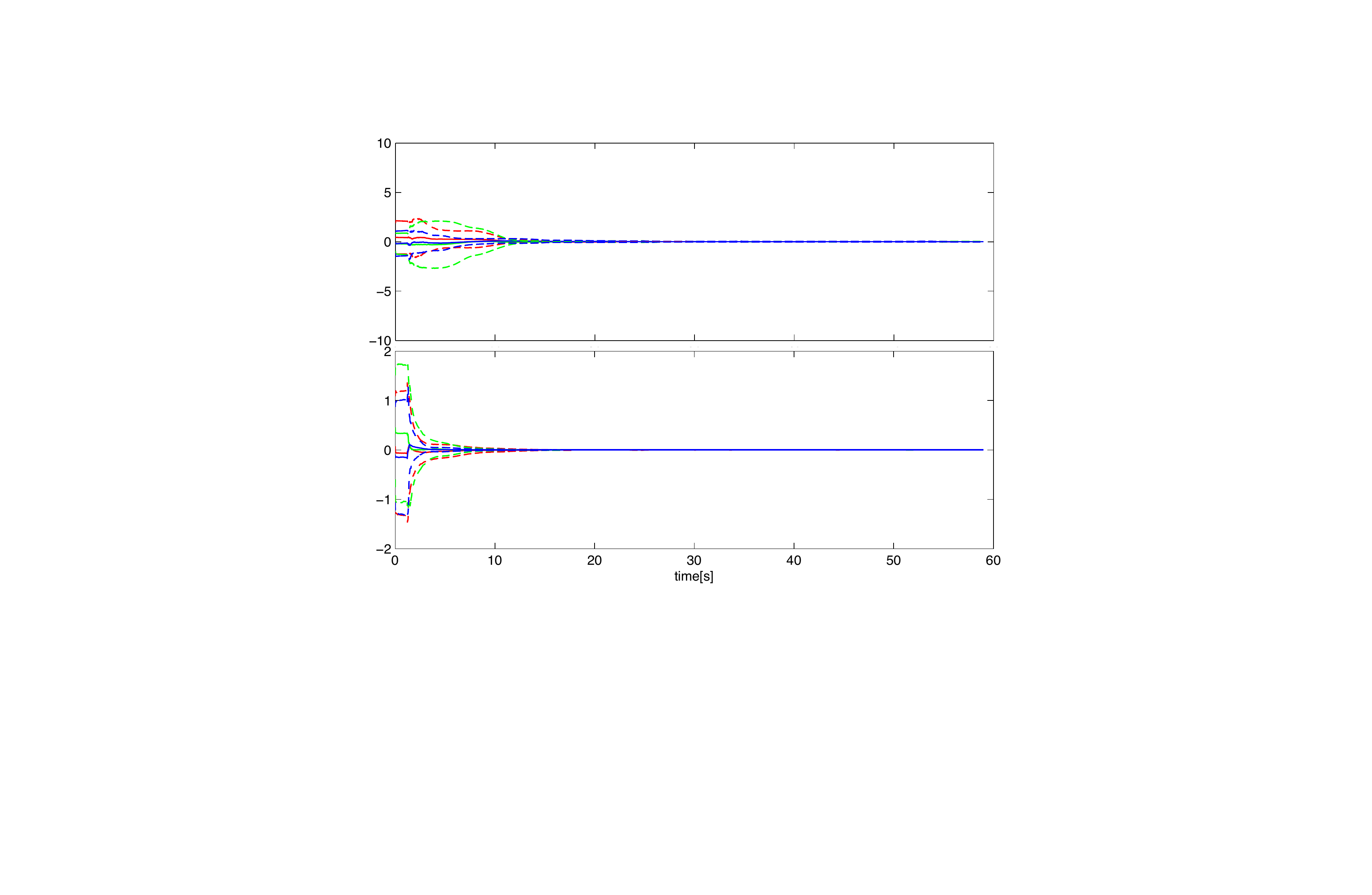}
\caption{\sl Mean (solid line) and twice the standard deviation (dashed lines) of squared estimation errors of alignment parameters (top translational, bottom rotational) aggregated over 50 Monte-Carlo trials with a constant bias.}
\label{fig-mc-statbias}
\end{figure}

\begin{figure}[htb]
\center
\hspace{-0.5cm}
\includegraphics[width=0.5\textwidth]{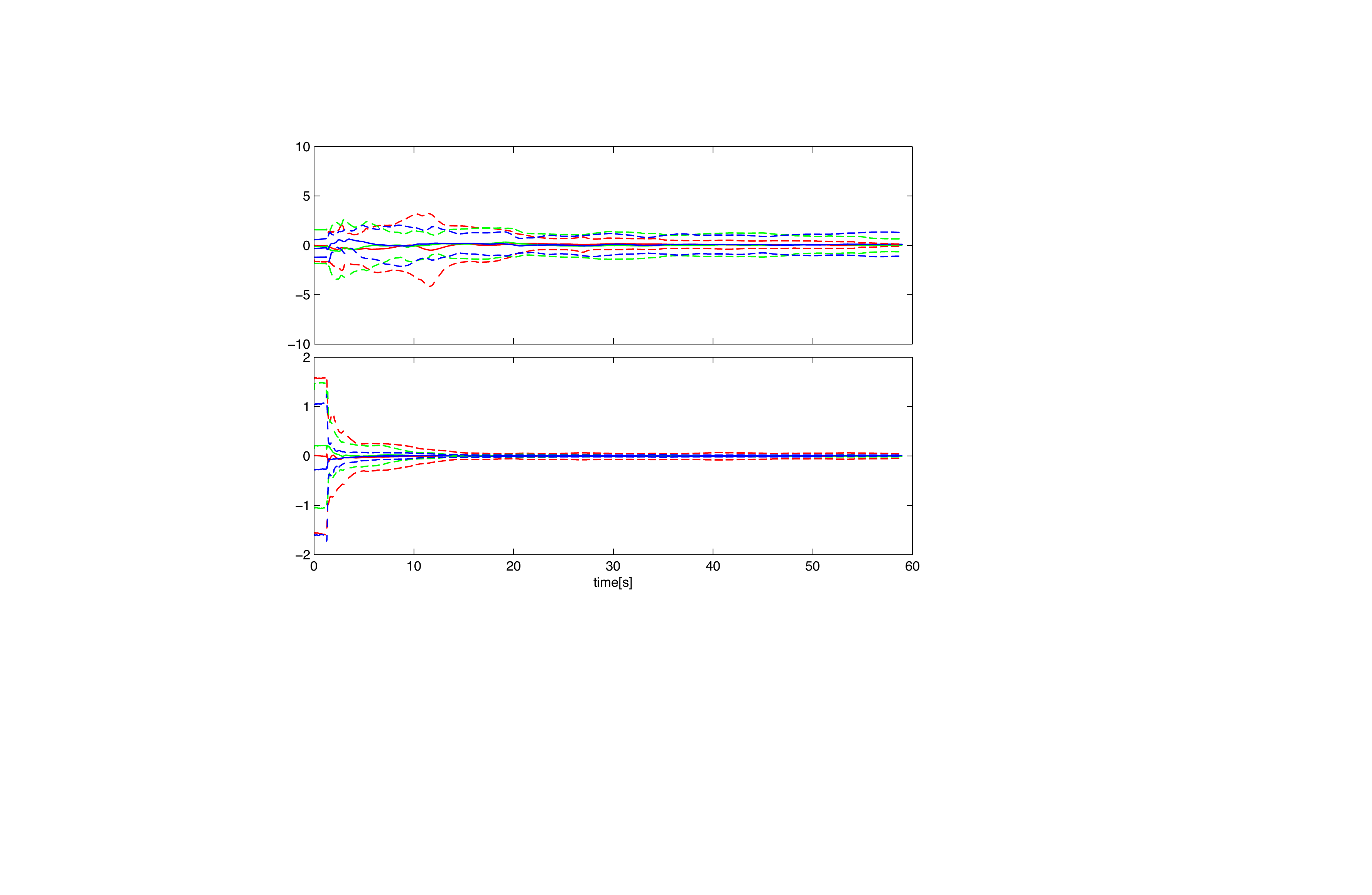}
\caption{\sl Mean (solid line) and twice the standard deviation (dashed lines) of squared estimation errors of alignment parameters (top translational, bottom rotational) aggregated over 50 Monte-Carlo trials with a time-varying bias with similar noise characteristics to the simulated sensors models.}
\label{fig-mc-floatbias}
\end{figure}
The experiments thus confirm the analysis.

\section{Discussion}

We have shown that when inertial sensor biases are included as model parameters in the state of a filter used for navigation estimates, with bias rates treated as unknown inputs, the resulting model is {\em not observable}. 

Consequently, we have re-formulated the problem of analyzing the convergence characteristics of (any) filters for vision-aided inertial navigation {\em not} as one of observability or identifiability, but one of {\em sensitivity}, by bounding the set of indistinguishable trajectories to a set whose volume depends on motion characteristics.

The advantage of this approach, compared to the standard observability analysis based on rank conditions, is that we characterize the indistinguishable set explicitly. Furthermore, rank conditions are ``fragile'' in the sense that the model can be nominally observable, and yet the condition number of the observability matrix be so small as to render the model effectively unobservable. We quantify the ``degree of unobservability'' as the sensitivity of the solution set to the input; provided that sufficient-excitation conditions are satisfied, the unobservable set can be bounded and effectively be treated as a singleton. More in general, however, the analysis provides an estimate of the uncertainty surrounding the solution set, as well as a guideline on how to limit it by enforcing certain gauge transformations.

\section*{Acknowledgements}
This work was supported by the Air Force Office of Scientific Research (grant no. AFOSR FA9550-12-1-0364) and the Office of Naval Research (grant no. ONR N00014-13-1-034).



\onecolumn
\appendix

\section{Proofs}

\subsection{Definitions}

The mechanization equations can be written in compact notation as 
\be
\begin{cases}
\dot x = f(x) + c(x)u + Dv + n_x \\
y(t) = h(x) + n_y
\end{cases}
\ee
where $x \doteq \{T, R, \V, \omega_b, \alpha_b, T_{cb}, R_{cb}, y^i_{t_i}, Z^i_{t_i}\}$ is the state, $u \doteq \{\omega_{imu}, \alpha_{imu}\} $ the input, and $v \doteq \{ w_b, \xi_b\}$ the unknown input. Note that we are overloading the notation, by using $v$ to denote the unknown input in the compact notation, and $\V$ the translational velocity in the original notation. This should not cause confusion as the two are never used in conjunction. Recall that $y^t \doteq \{y(\tau)\}_{\tau = 0}^t$ is a collection of output measurements, and $x^t = \{x(\tau)\}_{\tau = 0}^t$ a state {\em trajectory}. {In the absence of unknown inputs, $v = 0$, } given output measurements $y^t$ and known inputs $u^t$, we call 
\begin{equation}
{\cal I}(y^t | u^t; \tilde x_0) \doteq \{ {\tilde x}^t \ | \ y^t = h({\tilde x}^t) \ {\rm s. \ t. } \ \dot {\tilde x}(t) = f({\tilde x}) + c({\tilde x}) u(t), \ \tilde x(0) = \tilde x_0  \ \forall \ t\}
\end{equation}
the {\em indistinguishable set}, or set of {\em indistinguishable trajectories}, for a given input $u^t$. If the initial condition $\tilde x_0 = x_0$ equals the ``true'' one, the indistinguishable set contains at least one element, the ``true'' trajectory $x^t$. However, if $\tilde x_0 \neq x_0$, the true trajectory may not even be part of this set.

If the indistinguishable set is a singleton (it contains only one element, $\tilde x^t$, which is a function of the initial condition $\tilde x_0$), we say that the model is {\em observable up to the initial condition}, or simply {\em observable}.\footnote{We will assume that the solution of the differential equation $\dot x = f(x) + c(x) u$ is unique and continuously dependent on the initial condition, so if we impose $\tilde x_0 = x_0$, then $\tilde x^t = x^t$.} {If $\{\tilde x^t\}$ is further independent of the initial condition, we say that the model is {\em strongly observable}:}
$
{\cal I}(y^t | u^t; \tilde x_0) = \{ x^t\} \ \forall \ \tilde x_0, \ u^t. 
$

If the state includes unknown parameters with a trivial dynamic, and there is no unknown input, $v = 0$, then observability of the resulting model implies that the parameters are {\em identifiable.} That usually requires the input $u^t$ to be {\em sufficiently exciting} (SE), in order to enable disambiguating the indistinguishable states,\footnote{Sufficient excitation means that the input is {\em generic}, and does not lie on a thin set. That is, even if we could find a particular input $u^t$ that yields indistinguishable states, there will be another input  that is infinitesimally close to it} that will disambiguate them. as the definition does not require that every input disambiguates any state trajectories.

In the presence of {\em unknown inputs} $v \neq 0$, consider the following definition
\begin{equation}
{\cal I}_v(y^t | u^t; \tilde x_0) \doteq \{ {\tilde x}^t \ | \ \exists \ v^t \ {\rm s. \ t. } \ y^t = h(\tilde x^t), \ \dot {\tilde x}(t) = f({\tilde x}) + c({\tilde x}) u(t) + D v(t) \ \forall \ t; \ \tilde x(0) = \tilde x_0\}
\end{equation}
which is the set of {\em unknown-input indistinguishable states}. The model $\{f, c, D\}$ is said to be {\em unknown-input observable} (up to initial conditions) if the unknown-input indistinguishable set is a singleton. If such a singleton is further independent of the initial conditions, the model is strongly observable. The two definitions coincide once the only admissible unknown input is $v^t = 0$ for all $t$.

It is possible for a model to be observable (the indistinguishable set is a singleton), but not unknown-input observable (the unknown-input indistinguishable set is dense). In that case, the notion of {\em sensitivity} arises naturally, as one would want to measure the ``size'' of the unknown-input indistinguishable set as a function of the ``size'' of the unknown input. For instance, it is possible that if the set of unknown inputs is small in some sense, the resulting set of indistinguishable states is also small. If $v \in {\cal V}$ and for any $\epsilon>0$ there exists a $\delta >0$ such that ${\rm vol}({\cal V}) \le \epsilon$ for some measure of volume implies ${\rm vol}({\cal I}_v(y^t | u^t; \tilde x_0)) < \delta$ for any $u^t, \tilde x_0$, then we say that the model is {\em bounded-unknown-input/bounded-output observable} (up to the initial condition). If the latter volume is independent of $\tilde x_0$ we say that model is strongly bounded-unknown-input/bounded-output observable.

\subsection{Preliminary claims}
\label{sect-bearing-analysis}

\begin{lemma}\label{lemma-one}
Given $S\in \SO(3)$ and $\dot S\in T_{\SO(3)}(S)$, 
and $a\in\RR$, the matrix $(aS + \dot S)$ is nonsingular unless $a=0$, in which case it has rank $2$ or $0$.
\end{lemma}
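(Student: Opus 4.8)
The plan is to use the left-invariant trivialization of the tangent bundle of $\SO(3)$: since $\dot S \in T_{\SO(3)}(S)$, there is a unique $\w \in \real^3$ with $\dot S = S\widehat\w$, so
\begin{equation}
aS + \dot S = S\,(aI + \widehat\w).\nonumber
\end{equation}
Because $S \in \SO(3)$ is invertible, $\operatorname{rank}(aS + \dot S) = \operatorname{rank}(aI + \widehat\w)$ and $\det(aS + \dot S) = \det(aI + \widehat\w)$, so the whole statement reduces to a question about the $3\times 3$ matrix $aI + \widehat\w$ with $\widehat\w \in \so(3)$.

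First I would compute the determinant. Expanding $\det(aI + \widehat\w)$ directly (or using that the characteristic polynomial of a skew-symmetric $\widehat\w$ is $\lambda^3 + \|\w\|^2\lambda$) gives $\det(aI + \widehat\w) = a^3 + \|\w\|^2 a = a\,(a^2 + \|\w\|^2)$. For real $a$ this vanishes if and only if $a = 0$; hence $aS + \dot S$ is nonsingular whenever $a \neq 0$, which is the first half of the claim.

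Next I would handle the case $a = 0$, where $aS + \dot S = S\widehat\w$ has the rank of $\widehat\w$. If $\w = 0$ then $\widehat\w = 0$ has rank $0$. If $\w \neq 0$, then $\widehat\w$ is a nonzero real skew-symmetric matrix, which has even rank, so its rank is $0$ or $2$; since it is nonzero the rank is exactly $2$ (equivalently, $\widehat\w\,\w = \w\times\w = 0$ shows $\w$ spans a one-dimensional kernel, and the eigenvalues $0, \pm i\|\w\|$ leave no room for a larger kernel). This gives the stated dichotomy and finishes the proof.

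The argument is essentially routine; the only spot needing a little care is ruling out $\operatorname{rank}(\widehat\w) = 1$ in the $a = 0$ case, which is dispatched cleanly either by the even-rank property of real skew-symmetric matrices or by inspecting the eigenvalues $0, \pm i\|\w\|$. I do not anticipate any genuine obstacle.
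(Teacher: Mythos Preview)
Your proof is correct and shares the paper's opening move: factor $aS+\dot S = S(aI+\widehat\w)$ and reduce everything to the matrix $aI+\widehat\w$. Where you diverge is in showing invertibility for $a\neq 0$. You compute the determinant via the characteristic polynomial of $\widehat\w$, obtaining $\det(aI+\widehat\w)=a(a^2+\|\w\|^2)$. The paper instead argues geometrically: since $\widehat\w\,x\perp x$, one has $\|(aI+\widehat\w)x\|^2 = a^2\|x\|^2 + \|\widehat\w\,x\|^2$, which vanishes only if $ax=0$. Both arguments are short and elementary; yours is slightly more algebraic and gives the determinant explicitly, while the paper's orthogonality trick avoids any computation and would generalize immediately to $\SO(n)$. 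The $a=0$ case is handled identically in both proofs by the even-rank property of real skew-symmetric matrices.
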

\begin{proof} The tangent $\dot S$ has the form $S M$, where $M$ is some skew-symmetric matrix. \margincut{notation overload $x$}
As such, $Mx\perp x$ for any $x\in\RR^3$,  so
$$\|(aS+\dot S)x\|_2^2 = \|S(aI+M)x\|^2_2 = \|ax\|^2_2 +\|Mx\|^2_2.$$
The above is zero only if $ax=0$, so $(aS+\dot S)$ is nonsingular.
For the remaining cases, observe that a $3\times3$ skew-symmetric matrix has rank 2 or 0.
\end{proof}

\begin{lemma}\label{claim-one}
Let $(R(t),T(t))$ and $(\Rw(t),\Tw(t))$ be differentiable trajectories in $\SE(3)$.
For each time $t\1\in[0,T]$, there exists an open, full-measure subset $\Aa_{t\1}\subs\RR^3$
such that:
\begin{quote}
For any two static point-clouds $\{X^i\}_{i=1}^N\subs\Aa_{t\1}$ and $\{\Xw^i\}_{i=1}^N\subs\RR^3$ that satisfy
\begin{equation}
\pi\bigl(R\inv(t) (X^i-T(t)\bigr) = \pi\bigl(\Rw\inv(t) (\Xw^i-\Tw(t))\bigr) 
\quad\text{for all $i$ and $t$}
\label{prob}
\end{equation}
there exist constant scalings $\sigma_{it\1}>0$ and a constant rotation $S_{t\1}=\Rw({t\1})R\inv({t\1})$ such that
$$ \sigma_{it\1} S_{t\1} (X^i - T(t)) = (\Xw^i-\Tw(t)) + O((t-t\1)^2)
\quad\text{for all $i$ and $t$}.$$
Furthermore, if $T(t\1)\neq0$, then $\sigma_{it\1}=\sigma_{t\1}$ for all $i$.
\end{quote}
\end{lemma}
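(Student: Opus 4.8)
The plan is to turn the projective equality into an affine one, differentiate it once at $t\1$, and then use Lemma~\ref{lemma-one} together with a dimension count to force the scalings and the relative rotation to have vanishing first derivative at $t\1$; the second-order statement is then a Taylor expansion. I would fix $\Aa_{t\1}$ at the end as the complement of the finitely many ``bad'' loci that this argument encounters.

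\emph{Affine form and reduction.} Since $\pi(a)=\pi(b)$ exactly when $a$ and $b$ are parallel (positively, by cheirality), \eqref{prob} is equivalent to the existence of scalars $\lambda_i(t)>0$ with $\Xw^i-\Tw(t)=\lambda_i(t)\,S(t)\,(X^i-T(t))$ for all $i,t$, where $S(t)\doteq \Rw(t)R\inv(t)\in\SO(3)$; in particular $S(t\1)=S_{t\1}$ as asserted. Writing $\lambda_i(t)=\|\Xw^i-\Tw(t)\|/\|X^i-T(t)\|$ shows $\lambda_i$ is differentiable near $t\1$ as long as $X^i\neq T(t)$ there (I fold this into $\Aa_{t\1}$), and I set $\s_{it\1}\doteq\lambda_i(t\1)$. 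Because the affine identity holds exactly, the whole statement reduces to showing $\dot\lambda_i(t\1)=0$ and $\dot S(t\1)=0$: then $\lambda_i(t)S(t)=\s_{it\1}S_{t\1}+O((t-t\1)^2)$, and multiplying by the bounded curve $X^i-T(t)$ gives $\Xw^i-\Tw(t)=\s_{it\1}S_{t\1}(X^i-T(t))+O((t-t\1)^2)$, with the $O$ uniform in $i,t$ on the compact interval.

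\emph{Differentiation and genericity.} Differentiating the affine identity at $t\1$ and using $\tfrac{d}{dt}(X^i-T)=-\dot T$ gives, after dividing by $\lambda_i(t\1)>0$,
\[
\bigl(\mu_i\,S(t\1)+\dot S(t\1)\bigr)(X^i-T(t\1)) = d-\tfrac{1}{\lambda_i(t\1)}\,e,
\]
with $\mu_i\doteq\dot\lambda_i(t\1)/\lambda_i(t\1)$, $d\doteq S(t\1)\dot T(t\1)$, $e\doteq\dot{\Tw}(t\1)$. Now Lemma~\ref{lemma-one} does the work. If $\mu_i\neq0$, the matrix $\mu_i S(t\1)+\dot S(t\1)$ is invertible, so $X^i$ lies in the image of the rational map $(\mu,s)\mapsto T(t\1)+(\mu S(t\1)+\dot S(t\1))\inv(d-se)$, a measure-zero subset of $\RR^3$ depending only on the four trajectories. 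If $\mu_i=0$ but $\dot S(t\1)\neq0$, Lemma~\ref{lemma-one} says $\dot S(t\1)$ has rank $2$, so $d-\lambda_i(t\1)\inv e$ must lie in its $2$-dimensional range (which pins $\lambda_i(t\1)$ to a value independent of $i$) and $X^i-T(t\1)$ lies in a coset of the $1$-dimensional $\ker\dot S(t\1)$; again a null locus. Taking $\Aa_{t\1}$ to be the complement of the union of these (finitely many) proper subvarieties and of $\{X:X=T(t)\text{ for some }t\text{ near }t\1\}$ — an open, full-measure set — every $X^i\in\Aa_{t\1}$ forces $\mu_i=0$ and $\dot S(t\1)=0$, i.e. $\dot\lambda_i(t\1)=\dot S(t\1)=0$, and moreover $\lambda_i(t\1)\,d=e$ for all $i$. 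This last identity yields the ``furthermore'': either $d=0$ (hence $e=0$), or $d\neq0$, in which case $e\parallel d$ and $\lambda_i(t\1)=\|e\|/\|d\|$ is independent of $i$; I expect the hypothesis $T(t\1)\neq0$ to be exactly what guarantees the non-degenerate alternative $d=S(t\1)\dot T(t\1)\neq0$ (a genuine baseline).

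\emph{Expected obstacle.} The real work is the bookkeeping in the last step: verifying that each exceptional locus is a genuinely \emph{proper} subvariety (the defining rational maps are not dominant onto $\RR^3$) for \emph{every} admissible quadruple $(R,T,\Rw,\Tw)$, so that $\Aa_{t\1}$ is simultaneously open and of full measure, and doing so uniformly over all companion clouds $\{\Xw^i\}$ — which are not free parameters but pinned down by \eqref{prob}. A secondary point is the ``furthermore'': matching the instantaneous condition $\dot T(t\1)\neq0$ produced by the first-order jet to the stated $T(t\1)\neq0$ likely requires using \eqref{prob} on an interval around $t\1$ (a triangulation/baseline argument) rather than only at $t\1$.
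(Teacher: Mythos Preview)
Your proposal is correct and follows the paper's route: pass to the affine form $\sigma_i(t)S(t)(X^i-T)=\Xw^i-\Tw$, time-differentiate once, and use Lemma~\ref{lemma-one} to see that the exceptional locus in $X^i$ is at most two-dimensional---the only cosmetic difference is that the paper reaches this dimension bound by formally differentiating the time-derivative relation once more in $X^i$ (treating $\sigma_i,\dot\sigma_i$ as functions of $X^i$) and comparing ranks of the two sides, whereas you parametrize the bad set directly as the image of $(\mu,s)\mapsto T(t')+(\mu S+\dot S)^{-1}(d-se)$ together with cosets of $\ker\dot S$. Your reading of the ``furthermore'' clause is also right: the paper's own proof invokes $\dot T(t')\neq 0$ (not $T(t')\neq 0$) and obtains the common scaling from the identity $\lambda_i\,d=e$ exactly as you do.
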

\begin{proof}
Write $S(t) = \Rw(t)R\inv(t)$.
Equality under the projection $\pi$ implies that there exists a scaling $\sit(t)$
(possibly varying with $X^i$ and $t$) 
such that
\begin{equation}\sit S\bigl(X^i-T)  = \Xw^i-\Tw. \label{finalform}\end{equation}
For a given time $t\1$, 
we wish to find a suitably large set $\Aa_{t\1}$ such that
$\dot\sit(t\1)=\dot S(t\1)=0$ and $\sit(t\1)$ is independent of $X^i$, when $X^i\in\Aa_{t\1}$
Taking time derivatives,
$$
\bigl({\dot{\sit}}S + \sit\dot{S}\bigr)(X^i-T) - \sit S\dot{T}
= - \dot{\Tw}
$$
or, dividing by $\sit$,
\begin{align}
\bigl(\tfrac{\dot{\sit}}{\sit}S + \dot{S}\bigr)(X^i-T) - S\dot{T}
= - \tfrac{1}{\sit}\dot{\Tw}.
\label{compare}
\end{align}
Differentiating both sides with respect to $X^i$,
\begin{equation}
\bigl(\tfrac{\dot{\sit}}{\sit}S + \dot {S}\bigr)\dX^i +
 \bigl(\tfrac{d}{dX^i}\bigl(\tfrac{\dot {\sit}}{\sit}\bigr)\dX^i\bigr) S(X^i-T) =
-\bigl(\tfrac{d}{dX^i}\bigl(\tfrac{1}{\sit}\bigr)\dX^i\bigr)\dot{\Tw}. \label{moneyshot}
\end{equation}
Observe that
$\tfrac{d}{dX^i}\bigl(\tfrac{\dot {\sit}}{\sit}\bigr)\dX^i$ and
$\tfrac{d}{dX^i}\bigl(\tfrac{1}{\sit}\bigr)\dX^i$
are scalars. 
By Lemma \ref{lemma-one}, the LHS 
has rank 2 or greater (as a linear map on $\delta X^i$), unless $\dot {\sit}(t\1)=0$.  
The right-hand side (RHS), however, has rank at most 1.  
Thus, (\ref{compare})
is invalid for almost all $X^i$, unless $\dot {\sit}(t\1)=0$
(two maps of different ranks can only agree on a submanifold).
Plugging $\dot {\sit}=0$ into (\ref{moneyshot}), we are left with
\begin{equation}
\dot {S}\dX^i =
-\bigl(\tfrac{d}{dX^i}\bigl(\tfrac{1}{\sigma_i}\bigr)\dX^i\bigr)\dot {\Tw}. \label{Double-moneyshot}
\end{equation}
Now, the LHS 
has rank 2 or 0, while the RHS has rank 1 or 0.
Again, (\ref{compare}) is invalid for almost all $X^i$, unless 
$\dot {S}(t\1)=0$.
Let $\Aa_{t\1}\subs\RR^3$ be the open, full-measure subset \ignore{of full measure} (being the complement of two submanifolds) on which the latter must hold.
If, in addition, $\dot {T(t\1)}\neq 0$, then $\dot{\Tw}(t\1)\neq 0$ and
$\frac{d\sigma_i}{dX^i}(t\1)=0$,
we can finally write 
$$
\sigma_{t\1} S_{t\1} (X^i - T) = \Xw^i-\Tw + O((t-t\1)^2).
$$
\end{proof}

\begin{claim}[Indistinguishable Trajectories from Bearing Data Sequences]\label{claim-two}
Let $g(t)$ and $\gw(t)$ be differentiable trajectories in $\SO(3)$.
There exists an open, full-measure subset $\Aa\subs\RR^3$ such that
\begin{quote}
Given two static, generic (non-coplanar) point clouds $\{X^i\}_{i=1}^N\subs\Aa$ 
and $\{\Xw^i\}_{i=1}^N\subs\RR^3$, satisfying
$$\pi(g^{-1}(t)\, X^i) = \pi(\gw^{-1}(t)\, \Xw^i) \quad \text{for all $i$ and $t$},$$
there exist constant scalings $\sigma_i>0$ and a constant transformation $\bar g\in \SE(3)$ such that
\begin{equation}
\begin{cases}
\begin{tabular}{>{$}r<{$} >{$\!\!\!\!\!}l<{$}}
\Xw^i &= \sit (\bar g X^i) \\
\gw(t) &= \sit (\bar g g(t))
\end{tabular}
\end{cases}
\;\text{for all $i$ and $t$.}
\label{eq-gauge}
\end{equation} 
Furthermore, if $g(t)$ has a non-constant translational component, then $\sigma_i=\sigma$ for all $i$.
\end{quote}
\end{claim}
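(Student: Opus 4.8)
The plan is to bootstrap Claim~\ref{claim-two} from Lemma~\ref{claim-one} by upgrading the pointwise (time-local) statement to a global one. Lemma~\ref{claim-one} already gives us, for each time $t'$, an open full-measure set $\Aa_{t'}$, a constant scaling $\sigma_{it'}$ and a constant rotation $S_{t'} = \tilde R(t')R^{-1}(t')$ such that $\sigma_{it'}S_{t'}(X^i - T(t')) = (\tilde X^i - \tilde T(t')) + O((t-t')^2)$. The first step is to observe that the $O((t-t')^2)$ remainder is a genuine artifact of the second-order Taylor argument used in the lemma; since the hypothesis of Claim~\ref{claim-two} asserts the projection equality \emph{exactly for all $t$}, and since \eqref{finalform} holds exactly (the scaling $\sigma_i(t)$ exists for every $t$), once we know $\dot\sigma_i(t')=0$ and $\dot S(t')=0$ \emph{at every} $t'$ in a dense/full-measure set of times, continuity of $\sigma_i(\cdot)$ and $S(\cdot)$ forces them to be constant on the whole interval. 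So the proof strategy is: (i) fix a suitable generic point cloud; (ii) show $\dot\sigma_i \equiv 0$ and $\dot S\equiv 0$; (iii) integrate to get the global relation; (iv) read off $\bar g$ and the scalings.

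The first technical point is the choice of the set $\Aa$. Lemma~\ref{claim-one} produces a set $\Aa_{t'}$ depending on $t'$, but we need a single $\Aa$ good for all times. I would take $\Aa \doteq \bigcap_{t'\in \mathbb{Q}\cap[0,T]}\Aa_{t'}$: a countable intersection of open full-measure sets is still full-measure (though no longer open), and generic non-coplanar clouds $\{X^i\}\subset\Aa$ exist. For a cloud in this set, Lemma~\ref{claim-one} gives $\dot\sigma_i(t')=0$ and $\dot S(t')=0$ for all rational $t'$; since $\sigma_i$ and $S$ are differentiable (hence $C^1$ — this needs the trajectories $g,\tilde g$ to be $C^1$, which is part of the standing smoothness assumption) and their derivatives vanish on a dense set, the derivatives vanish identically. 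Here there is a subtlety worth flagging: to even \emph{have} differentiable $\sigma_i(t)$ and $S(t)$ globally one uses that $\pi$ is a submersion and the cloud is non-degenerate, so \eqref{finalform} defines $\sigma_i(t)$ smoothly; for a non-coplanar cloud $S(t)$ is over-determined but consistently so. Actually the cleaner route — and the one I would write — is: openness of each $\Aa_{t'}$ plus continuity means that if $\dot S$ or $\dot\sigma_i$ were nonzero at some $t'$, it would be nonzero on a neighborhood, and we could pick a \emph{rational} time $t''$ in that neighborhood with the cloud still in $\Aa_{t''}$, contradicting the lemma. This avoids the dense-intersection bookkeeping.

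With $\dot S\equiv 0$ we get $S(t)\equiv S$ constant, i.e. $\tilde R(t) = S R(t)$ for all $t$; with $\dot\sigma_i\equiv0$ we get $\sigma_i(t)\equiv\sigma_i$ constant. Plugging back into \eqref{finalform}, $\sigma_i S(X^i - T(t)) = \tilde X^i - \tilde T(t)$ for all $t$, and evaluating at $t=0$ (using $R(0)=I$, $T(0)=0$, per the conventions) identifies $\tilde X^i = \sigma_i S X^i$. Subtracting, $\sigma_i S\, T(t) = \tilde T(t) + (\text{const})$; taking the $X^i$-independent part gives $\tilde T(t) = \sigma_i S\,T(t) - \sigma_i S\, T(t)|_{t=0}$, but the left side is independent of $i$, forcing $\sigma_i S\, T(t)$ to be $i$-independent as a function of $t$ — which, \emph{if $T(\cdot)$ is non-constant}, forces $\sigma_i=\sigma$ for all $i$ (exactly the last sentence of Lemma~\ref{claim-one}, now globalized). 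Setting $\bar g \doteq (S, 0)$ or absorbing the translation appropriately, one obtains $\tilde X^i = \sigma_i(\bar g X^i)$ and $\tilde g(t) = \sigma_i(\bar g g(t))$ in the scaled-$\SE(3)$ sense of the footnote, which is \eqref{eq-gauge}.

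\textbf{The main obstacle} I anticipate is \emph{not} the algebra but the uniformity-in-time argument: Lemma~\ref{claim-one} is inherently local (it has an $O((t-t')^2)$ error and a $t'$-dependent exceptional set), and the honest work is arguing that genericity of one fixed cloud suffices to kill $\dot S$ and $\dot\sigma_i$ at \emph{every} time simultaneously, then invoking continuity/connectedness of $[0,T]$ to integrate. A secondary subtle point is the coplanarity hypothesis: it is what lets us differentiate \eqref{finalform} in $X^i$ freely (we need enough independent directions $\delta X^i$ to run the rank argument of Lemma~\ref{claim-one}), and it is also implicitly what guarantees $S(t)$ is well-defined and unique rather than merely existing; I would state explicitly where non-coplanarity enters. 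The case split on whether $T(\cdot)$ is constant is routine given the lemma.
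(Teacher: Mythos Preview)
Your proposal is correct and follows essentially the same strategy as the paper's proof: push Lemma~\ref{claim-one} from a single time $t'$ to all times, conclude that $S$ and $\sigma_i$ are constant, then extract $\bar g$. The paper handles the two technical steps slightly differently. To build a single $\Aa$ from the family $\{\Aa_{t'}\}_{t'}$, the paper sets $\Aa=\{X\in\RR^3:\,X\in\Aa_{t'}\text{ for almost every }t'\}$ and invokes Fubini's theorem to get full measure, rather than intersecting over rationals; this yields $\dot S=\dot\sigma_i=0$ for \emph{almost every} $t$, after which constancy is asserted (which, as you correctly flag, implicitly uses continuity of the derivatives, i.e.\ $C^1$ trajectories). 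For the extraction of $\bar g$, rather than assembling it from $S$ and an evaluated translation as you do, the paper defines $\bar g(t)\doteq(\sigma_i^{-1}\gw(t))\,g(t)^{-1}$, verifies $\Xw^i=\sigma_i(\bar g(t)X^i)$ by direct computation, and then uses non-coplanarity of $\{X^i\}$ to conclude in one line that the affine map $\bar g(t)$ must be constant --- this is where the genericity hypothesis is actually spent, and it is a bit cleaner than reading off the translation component at $t=0$. Your observation that the countable intersection need not be open applies equally to the paper's Fubini construction; neither proof actually delivers the openness clause in the statement.
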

\begin{proof}
Write $g(t) = (R(t), T(t))$ and $\gw(t) = (\Rw(t), \Tw(t))$.
Let $\Aa = \{X\in\RR^3:\, X\in\Aa_{t\1}\text{ for almost all } t\1\}$, with $\Aa_{t\1}$ defined as in Lemma \ref{claim-one}.  
By Fubini's theorem, this has full measure in $\RR^3$.
If $\{X^i\}\subs\Aa$, then the conditions for
Lemma \ref{claim-one} are satisfied for almost all $t$, and thus
there exist \emph{constant} (being stationary for almost all $t$) scalings $\sit$ and rotation 
$S=\Rw(t)R(t)\inv\in \SO(3)$ such that
$\Xw^i = \sit S(X^i - T_t) + \Tw_t$.

Define $\bar g(t) = (\sit\inv \gw(t))\, g(t)\inv$, and observe that
\begin{align*}
\Xw^i = \sit S(X^i - T_t) + \Tw_t
= \sit(\Rw_t(g\inv X^i) + \sit \inv \Tw_t)
=\sit \bigl((\sit \inv \gw(t))\, g(t)\inv X^i\bigr)
=\sit (\bar g(t) X^i).
\end{align*}
If this affine relation holds for the generic set $\{X^i\}$, then $\bar g(t)$ must be constant.  Next,
\begin{align*}
\sit (\bar g g(t)) = \sit((\sit \inv\gw(t))\, g(t)\inv g(t)) = \sit(\sit \inv\gw(t)) = \gw(t).
\end{align*}
Finally, if $T(t\1)=0$ for some $t\1$, then $\sit = \sit(t\1)=\sigma(t\1)=\sigma$
for all $i$.
\end{proof}

In what follows, we will avoid the cumbersome discussion of sets such as $\Aa\subs\RR^3$,
defined by a given trajectory,
and will instead speak of \emph{sufficiently exciting} trajectories, 
for which a given point cloud is suitable for tracking.
\begin{defn}[Sufficiently Exciting Motion]
A trajectory $g(t)$ is {\bf{sufficiently exciting}} relative to a point-\break cloud $\{X^i\}_{i=1}^N\subs\RR^3$ if,
for all $\{\Xw^i\}_{i=1}^N\subs\RR^3$ and $\gw(t)$ in $\SE(3)$,
\begin{align}
\pi(g(t)\inv(t)X^i) &= \pi(\gw(t)\inv \Xw^i) 
\quad\text{for all $i$ and $t$}\label{def_suffex}
\iff
\\
&
\left(
\begin{tabular}{>{$}r<{$}>{$\!\!\!\!\!}l<{$}}
\Xw^i &= \sigma(\bar g X^i)\\
\gw(t) &= \sigma(\bar g g(t))
\end{tabular}
\;\text{for all $i$ and $t$}
\right)
\;\text{for some constant $\sigma>0$ and $\bar g\in\SE(3)$.}\notag
\end{align}

That is, if the projection map $\pi(g(t)X^i)$ defines $g(t)$ and $\{X^i\}$ up to a constant rotation and mapping.
\end{defn}
\noindent
Observe that the right-to-left implication is always true:  if the RHS holds, then
$$\pi(\gw(t)\inv\Xw^i) = \pi((\sigma \bar g g(t))\inv \sigma(\bar g X^i)) 
\pi(g(t)\inv \bar g\inv \sigma\inv \sigma \bar g X^i )
= \pi(g(t)\inv X^i).$$
We will see that the sufficient excitation condition is very easily satisfied.
\begin{claim}
Given trajectories $g(t)$ and $\gw(t)$ in $\SE(3)$ with non-constant translation, 
and a set $\{X^i\}_{i=1}^N$ of $N\geq 4$ points sampled independently from identical distributions (i.i.d.) over $\RR^3$,
the trajectory $g(t)$ is a.s. sufficiently exciting relative to $\{X^i\}$.
\end{claim}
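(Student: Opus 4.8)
The right-to-left implication of (\ref{def_suffex}) is unconditional, as noted immediately after the definition, so the plan is to prove that the left-to-right implication holds almost surely, and the whole argument reduces to checking two ``general position'' conditions on the sampled cloud.

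First I would apply Claim \ref{claim-two} to the pair $(g,\gw)$. It supplies an open, full-measure set $\Aa_{g,\gw}\subset\RR^3$ with the property that whenever $\{X^i\}_{i=1}^N\subset\Aa_{g,\gw}$ is non-coplanar and $\pi(g^{-1}(t)X^i)=\pi(\gw^{-1}(t)\Xw^i)$ for all $i$ and $t$, there exist constant scalings $\sigma_i>0$ and a constant $\bar g\in\SE(3)$ with $\Xw^i=\sigma_i(\bar g X^i)$ and $\gw(t)=\sigma_i(\bar g g(t))$; moreover, since $g$ has non-constant translation, all the $\sigma_i$ coincide. That is exactly the right-hand side of (\ref{def_suffex}). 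Hence $g$ is sufficiently exciting relative to $\{X^i\}$ as soon as (i) every $X^i$ lies in $\Aa_{g,\gw}$ and (ii) the points $X^i$ are not all coplanar.

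Both events have probability one. For (i): $\Aa_{g,\gw}$ has full Lebesgue measure --- for each $t\1$ the set $\Aa_{t\1}$ of Lemma \ref{claim-one} is the complement of two submanifolds, and the Fubini argument used to pass from Lemma \ref{claim-one} to Claim \ref{claim-two} keeps the intersection over a.e.\ $t\1$ of full measure --- so, assuming, as is implicit in i.i.d.\ sampling over $\RR^3$, that the common law charges no Lebesgue-null set, each $X^i\in\Aa_{g,\gw}$ almost surely, and by independence all $N$ of them do. For (ii): coplanarity of four points is the vanishing of a non-trivial polynomial (a $4\times4$ determinant in homogeneous coordinates) on $(\RR^3)^4$, hence a probability-zero event; since $N\ge4$, almost surely at least one four-element subset --- say the first four points --- is non-coplanar. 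Intersecting, the two conditions hold simultaneously a.s., which proves the statement for the given $\gw$.

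The delicate point, and the step I expect to require genuine care, is the order of quantifiers: the definition of sufficient excitation demands that (\ref{def_suffex}) hold for \emph{all} competitor trajectories $\gw$ at once, while Claim \ref{claim-two} only provides a $\gw$-dependent exceptional set, and an uncountable union of null sets need not be null. To remove the dependence I would revisit the proof of Lemma \ref{claim-one}: there the bad locus is cut out by the $2\times2$ minors of the rank mismatch between the two sides of (\ref{moneyshot}) and (\ref{Double-moneyshot}), quantities that depend on $\gw$ only through a finite jet at a single time $t\1$. Using that $\dot T(t\1)\neq 0$ on a dense set of $t\1$ (guaranteed by non-constant translation), one can eliminate the finitely many free jet-coefficients of $\gw$ and be left with a single nontrivial analytic constraint on $\{X^i\}$ determined by $g$ alone, hence a measure-zero exceptional set valid for every $\gw$. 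Equivalently, one can invoke over-determination: $N\ge4$ generic bearings already pin down a rigid motion with room to spare, so the rigidity conclusion is structurally stable and the only essential exceptional set is the coplanar locus of step (ii). Making one of these two observations precise is the only non-routine ingredient; everything else is the measure-theoretic bookkeeping above.
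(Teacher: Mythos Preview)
Your core approach coincides with the paper's: invoke Claim~\ref{claim-two} to obtain a full-measure set $\Aa$, and then observe that i.i.d.\ samples from a non-singular law on $\RR^3$ land in $\Aa$ (and are generic/non-coplanar) almost surely. The paper's proof is in fact considerably terser than yours: it fixes $g(t)$, cites Claim~\ref{claim-two} for a full-measure $\Aa$ such that (\ref{def_suffex}) holds for any generic $\{X^i\}\subset\Aa$, and concludes in one line that i.i.d.\ non-singular sampling gives $\{X^i\}\subset\Aa$ a.s. Coplanarity is not isolated as a separate event; it is absorbed into the word ``generic'' already present in Claim~\ref{claim-two}.

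Where you go further than the paper is the quantifier issue. The paper simply does not address it: its proof writes ``Fix $g(t)$'' and treats the exceptional set as depending on $g$ alone, never revisiting the fact that in Lemma~\ref{claim-one} and Claim~\ref{claim-two} the sets $\Aa_{t'}$ and $\Aa$ are constructed from \emph{both} trajectories. (Note, too, that the statement of the claim itself lists $\gw$ among the givens, even though the conclusion ``sufficiently exciting'' quantifies over all $\gw$; the paper is not fully consistent here.) Your observation that a uniform-in-$\gw$ null set requires an extra argument --- eliminating the finitely many jet parameters of $\gw$ from the rank conditions in (\ref{moneyshot})--(\ref{Double-moneyshot}), or appealing to over-determination with $N\ge4$ bearings --- is a genuine strengthening of the paper's reasoning, not a deviation from it. If you want to match the paper exactly, you can drop that paragraph; if you want a rigorous proof of the statement as written, you should keep it and carry it through.
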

\begin{proof}
Fix $g(t)$. By Claim \ref{claim-two},
there exists a full-measure $\Aa\subs\RR^3$ such that (\ref{def_suffex}) holds for any 
static, generic point clouds
$\{X^i\}_{i=1}^N\subs\Aa$ and 
$\{\Xw^i\}_{i=1}^N\subs\RR^3$.
If $\{X^i\}$ is sampled i.i.d. from a non-singular distribution over $\RR^3$, 
then $\{X^i\}\subs\Aa$ almost surely.
\end{proof}

Equation (\ref{eq-gauge}) establishes the fact that the indistinguishable trajectories are an equivalence class parameterized by a group $\sigma(\bar g)$, called a {\em gauge transformation}. We now include a constant reference frame $g_{a}$. We then have the following claim.
\begin{claim}[Indistinguishable Alignments]\label{claim-thre}
For a point cloud $\{X^i\}_{i=1}^{N(t)}$, $N(t) > 3$, in general position (non-coplanar), and sufficiently exciting motion, 
\begin{equation}
\pi(g_{a} g^{-1} (t) X^i) = \pi(\gw_{a} \gw^{-1} (t) \Xw^i)
\end{equation}
if and only if there exist constants $\sigma >0$, $g_A$ and $g_B\in \SE(3)$ such that
\begin{equation}
{\begin{cases}
\begin{tabular}{>{$}r<{$} >{$\!\!\!\!\!}l<{$}} 
\Xw^i &= \sigma(g_B X^i)\\
\gw(t) &= \sigma(g_B g(t) g_A)\\
\gw_{a} &= \sigma(g_{a} g_A). 
\end{tabular}
\end{cases}}
\end{equation}
\end{claim}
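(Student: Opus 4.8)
The plan is to reduce this statement to Claim~\ref{claim-two} by absorbing the constant alignment frames into the trajectories. Set $h(t)\doteq g(t)g_a^{-1}$ and $\tilde h(t)\doteq \gw(t)\gw_a^{-1}$; these are again differentiable trajectories in $\SE(3)$, and since $h^{-1}(t)=g_a g^{-1}(t)$ and $\tilde h^{-1}(t)=\gw_a\gw^{-1}(t)$, the hypothesis $\pi(g_a g^{-1}(t)X^i)=\pi(\gw_a\gw^{-1}(t)\Xw^i)$ is literally the bearing hypothesis of Claim~\ref{claim-two} for $h,\tilde h,\{X^i\},\{\Xw^i\}$, with ``sufficiently exciting motion'' being exactly the condition that licenses Claim~\ref{claim-two} for $h$ relative to $\{X^i\}$ (in particular $h$ has non-constant translation, so the per-point scalings collapse to a single $\sigma$). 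Throughout I will use the elementary fact that the scaled rigid motion $g\mapsto\sigma(g)$ is conjugation by $D_\sigma\doteq\operatorname{diag}(\sigma,\sigma,\sigma,1)$ in homogeneous coordinates, hence a group homomorphism: $\sigma(g_1g_2)=\sigma(g_1)\sigma(g_2)$, $\sigma(g^{-1})=\sigma(g)^{-1}$, and $\sigma(gX)=\sigma(g)\sigma(X)$ on points, with $\sigma(X)=\sigma X$.

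The ``$\Leftarrow$'' direction is then a one-line substitution: plugging $\Xw^i=\sigma(g_BX^i)$, $\gw(t)=\sigma(g_Bg(t)g_A)$, $\gw_a=\sigma(g_ag_A)$ into $\gw_a\gw^{-1}(t)\Xw^i$ and using the homomorphism property, all occurrences of $g_A$, $g_B$ and $D_\sigma$ telescope and one is left with $D_\sigma\cdot\bigl(g_ag^{-1}(t)X^i\bigr)$; since $\pi$ is invariant under positive scaling, the two projections agree. For ``$\Rightarrow$'', apply Claim~\ref{claim-two} to $h,\tilde h,\{X^i\},\{\Xw^i\}$: there are a constant $\sigma>0$ and a constant $\bar g\in\SE(3)$ with $\Xw^i=\sigma(\bar g X^i)$ and $\tilde h(t)=\sigma(\bar g\,h(t))$, i.e. $\gw(t)\gw_a^{-1}=\sigma(\bar g)\,\sigma(g(t))\,\sigma(g_a)^{-1}$. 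Now set $g_B\doteq\bar g$ and take $g_A\doteq g_a^{-1}\,\sigma^{-1}(\gw_a)\in\SE(3)$, where $\sigma^{-1}(\cdot)$ divides the translation by $\sigma$ and leaves the rotation alone; then $\sigma(g_ag_A)=\gw_a$ holds by construction, $\Xw^i=\sigma(g_BX^i)$ is immediate, and substituting $\sigma(g_A)=\sigma(g_a)^{-1}\gw_a$ into $\sigma(g_Bg(t)g_A)=\sigma(\bar g)\sigma(g(t))\sigma(g_A)$ reproduces exactly $\sigma(\bar g)\sigma(g(t))\sigma(g_a)^{-1}\gw_a=\gw(t)$, completing the decomposition.

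The main obstacle is purely organizational: keeping track of the side on which the constant frames act — $g_a$ and $\gw_a$ sit on the world-point side of $g(t)$, so in the final form they merge with the $g_A$ factor, never with $g_B$, which forces $g_B=\bar g$ and pins $g_A$ down by $\gw_a$. One must also record up front that $\sigma(\cdot)$ is a genuine homomorphism so that the telescoping is legitimate in both directions, and verify the secondary point that sufficient excitation of $g(t)$ together with the general position of $\{X^i\}$ really does transfer to the composed trajectory $h(t)=g(t)g_a^{-1}$, including that $h$ has non-constant translation so that a single scalar $\sigma$, rather than per-point $\sigma_i$, suffices.
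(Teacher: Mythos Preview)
Your proposal is correct and follows essentially the same route as the paper: apply Claim~\ref{claim-two} to the composed trajectories $h(t)=g(t)g_a^{-1}$ and $\tilde h(t)=\gw(t)\gw_a^{-1}$ to obtain $\sigma$ and $g_B=\bar g$, then define $g_A\doteq g_a^{-1}\sigma^{-1}(\gw_a)$ and verify. Your write-up is in fact more careful than the paper's, which leaves implicit both the homomorphism property of $\sigma(\cdot)$ and the transfer of sufficient excitation to $h$.
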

\begin{proof}
From Claim \ref{claim-two} we get constant $g_B\in\SE(3)$ and $\sigma>0$ such that $\Xw^i = \sigma(g_B X^i)$
and 
\begin{eqnarray}
\gw(t)\gw\inv_a = \sigma(g_Bg(t)g_a\inv)
\end{eqnarray}
Let $g_A = g_a\inv\sigma\inv(\gw_a)$.  Then $\gw_a = \sigma(g_ag_A)$ and
$$\gw(t) = \sigma(g_Bg(t)g_A).$$
\end{proof}
We now include groups of points, each with its own reference frame.
\begin{claim}[Indistinguishable Groups] \label{claim-four}
For a number $i = 1, \dots, K$ of groups each with a number $j = 1, \dots, N_i \ge 3$ of points in general position (non-coplanar), and sufficiently exciting motion, 
\begin{equation}
\pi(g_{a} g^{-1} (t) g_i g_a^{-1} X^j) = \pi(\gw_{a} \gw^{-1} (t) \gw_i \gw_a^{-1} \Xw^j)
\end{equation}
if and only if there exist constants $\sigma >0, g_A, g_B,\bar  g_i \in \SE(3)$  such that
\begin{equation}
{\begin{cases}
\begin{tabular}{>{$}r<{$} >{$\!\!\!\!\!}l<{$}}
\Xw^j &= \sigma(g_a \bar g_i^{-1} g_i g_a^{-1} X^j)\\
\gw(t) &= \sigma(g_B g(t) g_A) \\
\gw_i &=  \sigma(g_B \bar g_i g_A) \\
\gw_{a} &= \sigma(g_a g_A)
\end{tabular}
\end{cases}}
\label{eq-vis-only}
\end{equation}
\end{claim}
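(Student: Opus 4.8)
The plan is to obtain Claim~\ref{claim-four} from Claim~\ref{claim-thre} (itself built on Claim~\ref{claim-two}) by folding the unknown group frames $g_i$ into the point clouds, applying the single-alignment result, and then peeling the group frames back off one group at a time. The one algebraic fact I would isolate up front is that the scaling operation $\sigma(\cdot)$ is a group homomorphism on (scaled) rigid motions: in homogeneous coordinates it is conjugation by the dilation $\operatorname{diag}(1,1,1,\sigma\inv)$, so $\sigma(g_1 g_2)=\sigma(g_1)\sigma(g_2)$, $\sigma(g)\inv=\sigma(g\inv)$, and $\sigma(g)$ carries the $\sigma$-scaled point $\sigma p$ to the $\sigma$-scaled point $\sigma(gp)$. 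Granting this one-line remark, the right-to-left implication is immediate: substituting the displayed relations into $\gw_a\gw\inv(t)\gw_i\gw_a\inv\Xw^j$ and collapsing the composition with the homomorphism property yields $\sigma\bigl(g_a g\inv(t)\,g_i\,g_a\inv X^j\bigr)$, which has the same image under $\pi$ as $g_a g\inv(t)\,g_i\,g_a\inv X^j$ by scale-invariance of $\pi$.

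For the forward direction, I set $Y^j \doteq g_i g_a\inv X^j$ and $\tilde Y^j \doteq \gw_i\gw_a\inv\Xw^j$ for each point $j$ belonging to group $i$. With this substitution the hypothesis becomes
\[
\pi\bigl(g_a g\inv(t)\,Y^j\bigr) = \pi\bigl(\gw_a\gw\inv(t)\,\tilde Y^j\bigr)\qquad\text{for all }j,t,
\]
which is exactly the hypothesis of Claim~\ref{claim-thre} for the aggregated clouds $\{Y^j\}$ and $\{\tilde Y^j\}$. Since each group is in general position and a rigid motion preserves general position and non-coplanarity, the aggregate is generic, and the sufficient-excitation hypothesis lets me invoke Claim~\ref{claim-thre}: it returns \emph{one} triple $\sigma>0$, $g_A,g_B\in\SE(3)$ with $\tilde Y^j=\sigma(g_B Y^j)$, $\gw(t)=\sigma(g_B g(t) g_A)$, and $\gw_a=\sigma(g_a g_A)$. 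The latter two are already two of the four conclusions, with the crucial feature that the same $\sigma, g_A, g_B$ serve every group.

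It remains to supply the per-group constant $\bar g_i$. I would simply \emph{define} $\bar g_i \doteq g_B\inv\,\sigma\inv(\gw_i)\,g_A\inv$; this lies in $\SE(3)$ because $\sigma\inv(\gw_i)$ is itself an ordinary rigid motion (no scale leaks into $\bar g_i$), and by construction $\gw_i=\sigma(g_B\bar g_i g_A)$ --- the third conclusion. For the fourth, I start from $\gw_i\gw_a\inv\Xw^j=\sigma(g_B Y^j)=\sigma(g_B g_i g_a\inv X^j)$, solve for $\Xw^j$, and substitute $\gw_a=\sigma(g_a g_A)$ together with $\gw_i\inv=\sigma(g_A\inv\bar g_i\inv g_B\inv)$. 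The homomorphism property collapses $\gw_a\gw_i\inv$ to $\sigma(g_a\bar g_i\inv g_B\inv)$, and after $g_B\inv g_B$ cancels I obtain $\Xw^j=\sigma\bigl(g_a\bar g_i\inv g_i g_a\inv X^j\bigr)$, as required.

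I do not expect a genuine conceptual obstacle: this is a composition on top of Claim~\ref{claim-thre} with no new geometry. The two places where care is needed --- and where I would slow down --- are (i) checking that the aggregated cloud $\bigcup_i g_i g_a\inv\{X^j\}$ is still generic and non-coplanar, which is exactly where the per-group general-position assumption and the ``sufficiently exciting'' abstraction are used; and (ii) the bookkeeping for $\sigma(\cdot)$: verifying that it commutes past every group composition and, in particular, that $\bar g_i$ emerges in $\SE(3)$ rather than carrying a stray scale factor. Nailing the homomorphism property of $\sigma(\cdot)$ at the outset is what makes everything after it mechanical.
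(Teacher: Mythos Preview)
Your proof is correct and follows the same overall strategy as the paper: reduce to a prior claim, then do algebraic bookkeeping to extract the four displayed relations. The one organizational difference is worth noting. The paper invokes Claim~\ref{claim-two} directly, applying it to the composite trajectory $g_a g_i\inv g(t) g_a\inv$ and the raw points $X^j$; this produces an auxiliary constant $g_C$ from which $g_A$, $g_B$, $\bar g_i$ are then \emph{defined} and each conclusion verified by substitution. You instead absorb $g_i g_a\inv$ into the points, forming the aggregate cloud $\{Y^j\}$, and invoke Claim~\ref{claim-thre} once; this hands you a single $\sigma$, $g_A$, $g_B$ common to all groups without a separate argument, and your $\bar g_i$ is then read off from $\gw_i$. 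Your explicit use of the homomorphism property of $\sigma(\cdot)$ (conjugation by $\operatorname{diag}(1,1,1,\sigma\inv)$) makes the final cancellation for $\Xw^j$ transparent, whereas the paper carries out that step by direct expansion. Both routes are short; yours is marginally cleaner in that the common-across-groups nature of $\sigma$, $g_A$, $g_B$ is automatic rather than implicit.
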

\begin{proof}
From Claim \ref{claim-two}, we get constant $g_C\in SE(3)$ and $\sigma>0$ such that 
\begin{gather}
\Xw^i=\sigma(g_CX^i), \label{rel x}\\
\gw_a \gw_i\inv \gw(t)\gw_a\inv = \sigma(g_Cg_ag\inv_ig(t)g_a\inv). \label{rel g}
\end{gather}
Define
$$
g_A \doteq g_a\inv\sigma\inv(\gw_a),
\qquad\qquad
g_B \doteq\sigma\inv(\gw_i\g_a\inv)g_Cg_ag_i\inv,
\qquad\qquad
\bar g_i \doteq g_ig_a\inv g_C\inv g_a. 
$$
Then, applying the definition of $\bar g_i$ to (\ref{rel x}),
\begin{align*}
\Xw^j &= \sigma(g_C X^j)
= \sigma((g_a\bar g_i\inv g_i g_a\inv) X^j).
\end{align*}
Applying the definitions of $g_A$ and $g_B$ to (\ref{rel g}),
\begin{align*}
\gw(t) &= \gw_i\gw_a\inv\sigma(g_Cg_ag_i\inv g(t)g_a\inv)\gw_a
= \sigma\bigl(
\underset{g_B}{\underbrace{\sigma\inv(g_i \gw_a\inv) g_C g_a g_i\inv}}\,
g(t)\,
\underset{g_A}{\underbrace{g_a\inv \sigma\inv(\gw_a)}}
\bigr)
= \sigma(g_B g(t) g_A).
\end{align*}
Rearranging the definitions of $g_A$, $g_B$ and $\bar g_i$,
\begin{align*}
\gw_i &= \sigma(g_B g_i g_a\inv g_C\inv)\gw_a
=\sigma\left(g_B g_i g_a\inv g_C\inv \sigma(\gw_a)\right)
=\sigma\bigl(g_B 
\underset{\bar g_i}{\underbrace{g_i g_a\inv g_C\inv g_a}}\,
\underset{g_A}{\underbrace{g_a\inv \sigma(\gw_a)}}
\bigr)
=\sigma(g_B\bar g_i g_A).
\end{align*}
Finally, rearrange the definition of $g_A$ to get
$$\gw_a = \sigma(g_ag_A).$$
\end{proof}
Eq. (\ref{eq-vis-only}) describes the ambiguous state trajectories if only bearing measurement time series are given. In that case, there is no alignment to other sensor, so we can assume without loss of generality that $g_a = Id$ and so for $\gw_a$, which in turn implies $g_A = Id$. The resulting ambiguity is well-known \cite{soatto97} and shows that scale $\sigma$ is constant but arbitrary, that the global reference frame is arbitrary (since $g_B$ is), and that the reference frame of each group is also arbitrary (since $\bar g_i$ is). To lock these ambiguities, we can fix three directions for each group (thus fixing $\bar g_i$) and, in addition, for one of the groups fix the pose (thus fixing $g_B$); finally, we can impose that the centroid of the points in that one group (the ``reference group'') be one, which fixes $\sigma$. Thus, an observer designed based on the standard model, where $3$ directions within each group are saturated, and where the pose of one group is fixed, and the centroid of 
the 
group is fixed, is observable, and under the usual assumptions it should converge to a state trajectory that is related to the true one by an arbitrary unknown scaling, and global reference frame.

Now, when inertial measurements are present, of all the possible trajectories that are indistinguishable from the measurements, we are interested {\em only} in those that are compatible with the dynamical model driven by IMU measurements. Since the fact that $X^j$ and $g_a$ are constant has already been enforced, the model will impose no constraints on $\Xw^j, \gw_i$ and $\gw_a$. However, it will offer constraints on $\gw(t)$, that depends on the arbitrary constants $\sigma, g_A, g_B$. 

\subsection{Indistinguishable trajectories in bearing augmentation}

\begin{defn}
For an $\RR^3$-valued trajectory $f:\RR\to\RR^3$ and interval $\I\subs\RR^+$, define \margincut{notation overload $f$, $\cal I$, $\times$, $x$}
\begin{align*}
\m(f\!:\!\I)
&\doteq \!\!\inf_{\|x\|=1}\Bigl(\sup_{t\in \I}\,|{f}(t)\cdot x|\Bigr)
= \inf_{\|x\|=1}\Bigl(\sup_{t\in \I}\,\|{f}(t)\times x\|\Bigr),\\
\M(f\!:\!\I)
&\doteq \!\!\sup_{\|x\|=1}\Bigl(\sup_{t\in\I}\,|{f}(t)\cdot x|\Bigr) = \sup_{t\in\I}\|{f}(t)\|
,
\quad\text{and}\\
\bar\m(f\!:\!\I) &\doteq \sqrt{\max\{0,\, 2\m(f\!:\!\I)^2 - \M(f\!:\!\I)^2\}}.
\end{align*}
\end{defn}
Observe that $\M(f\!:\!\I) \geq \m(f\!:\!\I) \geq \bar\m(f\!:\!\I)$, and that the inequalities are strict unless
$\{\pm f(t)|\,t\in\I\}$ is dense on the sphere of radius $\M(f\!:\!\I)$.  We use these ``minimum-excitation''
bounds in order to prove a partial converse of the Cauchy-Schwarz inequality:
\begin{lemma}
Let $A = c_1I + c_2R$, for some rotation $R\in\SO(3)$ and scalars $c_1$ and $c_2$. \margincut{notation overload $c$}
Then, for any trajectory $f:\RR^+\to\RR^3$ and set of times $\I\subs\RR^+$,
$$\sup_{t\in \I} \left\|A f(t)\right\| \;\geq\;
\left\|A\right\|\bar\m(f\!:\!\I).
$$
\end{lemma}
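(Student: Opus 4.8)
This inequality is a one-dimensional statement in disguise: it suffices to pair the trajectory $f$ against a single well-chosen unit direction. For any unit vector $e\in\RR^3$ and any $t$, $\|Af(t)\|\ge|e\cdot Af(t)|=|(A^Te)\cdot f(t)|$. I take $e$ to be a unit left-singular vector of $A$ associated with its largest singular value, so that $A^Te=\|A\|\,w$ for some unit vector $w$. Then $\|Af(t)\|\ge\|A\|\,|w\cdot f(t)|$ for every $t$, hence
\[
\sup_{t\in\I}\|Af(t)\|\;\ge\;\|A\|\,\sup_{t\in\I}|w\cdot f(t)|\;\ge\;\|A\|\,\m(f\!:\!\I),
\]
the last step being immediate from $\m(f\!:\!\I)=\inf_{\|x\|=1}\sup_{t\in\I}|f(t)\cdot x|$ evaluated at the particular unit vector $x=w$. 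Since $\bar\m(f\!:\!\I)\le\m(f\!:\!\I)$ (as observed just after the definition), the claimed bound with $\bar\m$ follows a fortiori.

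The argument above does not actually use the form $A=c_1I+c_2R$; if one prefers to stay within that structure --- which is what makes $\|A\|$ explicitly computable in the applications --- one can instead exploit the normality of $A$ (indeed $AA^T=A^TA=(c_1^2+c_2^2)I+c_1c_2(R+R^T)$) to split $\RR^3$ into the rotation axis $\hat u$ of $R$, where $A$ is multiplication by $c_1+c_2$, and the orthogonal plane $\hat u^\perp$, where $A$ acts as $s\doteq\sqrt{c_1^2+c_2^2+2c_1c_2\cos\phi}$ times a rotation ($\phi$ being the rotation angle of $R$). Writing $f_\perp\doteq f-(f\cdot\hat u)\hat u$, this yields $\|Af\|^2=(c_1+c_2)^2(f\cdot\hat u)^2+s^2\|f_\perp\|^2$ and $\|A\|=\max\{|c_1+c_2|,\,s\}$, and one finishes by cases: if $\|A\|=|c_1+c_2|$, drop the $s^2$-term and test $f$ against $\hat u$; if $\|A\|=s$, drop the $(c_1+c_2)^2$-term and note $\sup_{t\in\I}\|f_\perp(t)\|\ge\sup_{t\in\I}|f(t)\cdot x|\ge\m(f\!:\!\I)$ for any unit $x\in\hat u^\perp$. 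Either route gives $\sup_{t\in\I}\|Af(t)\|\ge\|A\|\,\m(f\!:\!\I)\ge\|A\|\,\bar\m(f\!:\!\I)$.

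I do not expect a serious obstacle here. The two points needing care are (i) reading off $\|A\|$ correctly as the larger of the two singular values of $A$ --- this is where the special form $c_1I+c_2R$, hence normality, hence the singular values $|c_1+c_2|$ and $s$, is convenient, although the first argument shows it is not strictly necessary for the inequality itself; and (ii) the easy-to-slip bookkeeping between the supremum over $t$ and the infimum over directions, namely that $\sup_{t\in\I}|f(t)\cdot x|\ge\m(f\!:\!\I)$ for \emph{every} fixed unit $x$, which is nothing more than the definition of $\m$. Note finally that $\bar\m$ rather than $\m$ in the statement is harmless slack: the plan above in fact proves the stronger bound with $\m$, and $\bar\m\le\m$.
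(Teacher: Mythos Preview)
Your argument is correct and in fact establishes the stronger inequality $\sup_{t\in\I}\|Af(t)\|\ge\|A\|\,\m(f\!:\!\I)$, from which the stated bound with $\bar\m$ follows since $\bar\m\le\m$. Moreover, your first route works for an arbitrary real matrix $A$: the single Cauchy--Schwarz step $\|Af\|\ge|u_1\cdot Af|=|(A^Tu_1)\cdot f|=\sigma_1\,|v_1\cdot f|$ against the top singular pair already gives the pointwise bound $\|Af(t)\|\ge\|A\|\,|v_1\cdot f(t)|$, and taking $\sup_t$ then $\inf$ over unit directions yields $\|A\|\,\m(f\!:\!\I)$ directly.

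The paper proceeds differently. It uses the normality of $A=c_1I+c_2R$ to expand $\|Af\|^2=\sum_i\lambda_i^2(v_i\cdot f)^2$ in an orthonormal (singular-vector) basis, and then applies the algebraic step
\[
\sum_i\lambda_i^2(v_i\cdot f)^2\;\ge\;\lambda_1^2\bigl((v_1\cdot f)^2-(v_2\cdot f)^2-(v_3\cdot f)^2\bigr)=\|A\|^2\bigl(2(v_1\cdot f)^2-\|f\|^2\bigr),
\]
followed by $\sup_t\bigl(2(v_1\cdot f)^2-\|f\|^2\bigr)\ge 2\m^2-\M^2$. The subtraction of the $(v_2\cdot f)^2$ and $(v_3\cdot f)^2$ terms, rather than simply dropping the nonnegative contributions $\lambda_2^2(v_2\cdot f)^2+\lambda_3^2(v_3\cdot f)^2$, is precisely what introduces the weaker constant $\bar\m$. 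Your approach avoids that loss. As a bonus, your argument subsumes the paper's subsequent lemma (the case $A=I-R$, where the paper does obtain the $\m$ bound but via a separate spectral computation): both follow from the same one-line singular-vector estimate. The structural hypothesis $A=c_1I+c_2R$ is, as you note, only relevant for computing $\|A\|$ explicitly in the applications, not for the inequality itself.
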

\begin{proof}
First, observe that $A$ is orthogonal:
$$AA^T = (c_1I + c_2R)(c_1I + c_2R^T) = 2c_1c_2I + c_1c_2(R + R^T)
= A^TA.$$
Let $\{(\lambda_i, v_i)\}_{i=1}^3$ be orthonormal eigenvalue/eigenvector pairs of $A$, with $\lambda_1\geq\lambda_2\geq\lambda_3$.
\margincut{notation overload $v$}
\begin{align*}
\|A f(t)\|^2 &= \lambda_1^2(v_1\cdot f(t))^2 + \lambda_2^2(v_2\cdot f(t)^2 + \lambda_3^2(v_3\cdot f(t))^2\\
&\geq \lambda_1^2\bigl((v_1\cdot f(t))^2 - (v_2\cdot f(t)^2 - (v_3\cdot f(t))^2\bigr)\\
&=\|A\|^2\bigl(2(v_1\cdot f(t))^2 - \|f(t)\|^2\bigr).
\end{align*}
Taking the supremum over $\I$,
\begin{align*}
\sup_{t\in\I}\|A f(t)\|^2 &\geq \|A\|^2\sup_{t\in\I}\bigl(2(v_1\cdot f(t))^2 - \|f(t)\|^2\bigr)\\
&\geq \|A\|^2\bigl(2\sup_{t\in\I}(v_1\cdot f(t))^2 - \sup_{t\in\I}\|f(t)\|^2\bigr)\\
&\geq \|A\|^2\bigl(2\m(f\!:\!\I)^2 - \M(f\!:\!\I)^2\bigr)
\end{align*}
\end{proof}

\begin{lemma}
Let $A = I-R$, for some rotation $R\in\SO(3)$.
Then, for trajectory $f:\RR^+\to\RR^3$ and $\I\subs\RR^+$,
$$\sup_{t\in \I} \left\|A f(t)\right\| \;\geq\;
\left\|A\right\|\m(f\!:\!\I).
$$
\end{lemma}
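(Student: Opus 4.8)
The plan is to exploit the rigid algebraic structure of $A = I - R$. A rotation $R\in\SO(3)$ fixes its axis, so $A$ annihilates that axis and acts on the orthogonal plane as $\|A\|$ times an isometry; hence $\|Af(t)\|$ is exactly $\|A\|$ times the length of the component of $f(t)$ perpendicular to the axis. The bound then follows by exchanging two suprema and bounding a restricted supremum from below by the infimum defining $\m(f\!:\!\I)$.

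\textbf{Step 1 (geometry of $A$).} If $R=I$ the statement is trivial since $\|A\|=0$, so assume otherwise and let $u$ be a unit vector with $Ru=u$. In an orthonormal basis $\{u,e_1,e_2\}$, $R$ is block diagonal, equal to $1$ on $\operatorname{span}(u)$ and to a planar rotation by some angle $\theta$ on $\operatorname{span}\{e_1,e_2\}$. Thus $A=I-R$ vanishes on $\operatorname{span}(u)$ and equals $(1-\cos\theta)I_2+\sin\theta\,J$ on the plane, where $J=\bigl(\begin{smallmatrix}0&1\\-1&0\end{smallmatrix}\bigr)$. A one-line computation gives $\bigl((1-\cos\theta)I_2-\sin\theta J\bigr)\bigl((1-\cos\theta)I_2+\sin\theta J\bigr)=2(1-\cos\theta)I_2$, so on the plane $A$ is $\sqrt{2(1-\cos\theta)}$ times an isometry. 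Consequently $\|A\|=\sqrt{2(1-\cos\theta)}$ and, writing $P=I-uu^T$ for the orthogonal projector onto $u^\perp$, one gets $\|Ax\|=\|A\|\,\|Px\|$ for every $x\in\RR^3$ (equivalently $A^TA=\|A\|^2P$).

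\textbf{Step 2 (swap the suprema).} For each $t$ we have $\|Af(t)\|=\|A\|\,\|Pf(t)\|$, and because $P$ projects onto the plane $u^\perp$, a linear-functional argument gives $\|Pf(t)\|=\sup\{\,|f(t)\cdot y|:\ \|y\|=1,\ y\perp u\,\}$ (the maximum of $|f(t)\cdot\,\cdot\,|$ over the unit disk of $u^\perp$ is attained on the boundary circle, at $y=Pf(t)/\|Pf(t)\|$ when $Pf(t)\neq0$). Taking $\sup_{t\in\I}$ and exchanging the two suprema,
\[
\sup_{t\in\I}\|Af(t)\|=\|A\|\,\sup_{\substack{\|y\|=1\\ y\perp u}}\ \sup_{t\in\I}|f(t)\cdot y|.
\]
For every fixed unit $y\perp u$ the inner supremum is at least $\inf_{\|x\|=1}\sup_{t\in\I}|f(t)\cdot x|=\m(f\!:\!\I)$, and since the set of unit vectors orthogonal to $u$ is nonempty, the outer supremum is $\geq\m(f\!:\!\I)$ as well. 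This yields $\sup_{t\in\I}\|Af(t)\|\geq\|A\|\,\m(f\!:\!\I)$.

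\textbf{Main obstacle.} There is no genuinely hard step; the only point needing care is Step 1, namely that $A=I-R$ is a scalar multiple of a partial isometry whose kernel is the single line $\operatorname{span}(u)$, i.e. $A^TA=\|A\|^2P$. This is precisely what upgrades the weaker constant $\bar\m$ of the preceding lemma (which applies here since $A=1\cdot I+(-1)\cdot R$) to the sharp $\m$: the ``blind direction'' of $A$ is a single axis, so $\|Pf(t)\|$ dominates $|f(t)\cdot y|$ for $y$ ranging over an entire circle of test directions, leaving us free to pick, for the given $f$, the direction that makes $\sup_t|f(t)\cdot y|$ large. I would only double-check the degenerate case $R=I$ (trivial) and assume $\I\neq\emptyset$ so the suprema are well posed.
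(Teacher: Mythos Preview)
Your proof is correct and follows essentially the same route as the paper: both arguments hinge on the observation that $A=I-R$ has one zero singular value (along the rotation axis $u$) and two equal singular values $\|A\|$ on $u^\perp$, so that $\|Af(t)\|=\|A\|\,\|Pf(t)\|$, after which a supremum over unit directions in $u^\perp$ is swapped with $\sup_{t\in\I}$ and bounded below by $\m(f\!:\!\I)$. Your derivation of $A^TA=\|A\|^2P$ via the real $2\times 2$ block is in fact a bit more explicit than the paper's eigenvector computation, but the content is the same.
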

\begin{proof}
Let $\{(\lambda, v_1), (\bar\lambda, v_2), (1,0)\}$ be the orthonormal eigenvalue/eigenvector pairs of $R$.
Since $R$ and $I$ commute, $\{(\lambda-1, v_1),$ $(\bar\lambda-1, v_2), (0, u)\}$ are the eigenpairs of $A$, 
and $\|A\|=|\lambda-1| = |\bar\lambda-1|$. Then,
$$\|Af(t)\|^2 = |\lambda-1|^2(v_1\cdot f(t))^2 + |\bar\lambda-1|^2(v_2\cdot f(t))^2 + 0
= \|A\|^2(w\cdot f(t))^2,
$$
where 
$$w 
\doteq \frac{(v_1\cdot f(t)) v_1 + (v_2\cdot f(t)) v_2}{\|(v_1\cdot f(t)) v_1 + (v_2\cdot f(t)) v_2\|}
= \frac{(v_1\cdot f(t)) v_1 + (v_2\cdot f(t)) v_2}{\sqrt{(v_1\cdot f(t))^2 + (v_2\cdot f(t))^2}}.
$$
Taking the supremum over $\I$,
$$\sup_{t\in\I}\|Af(t)\|^2  = \|A\|^2\,\sup_{t\in\I}\|w\cdot f(t)\|^2 \geq \|A\|^2\m(f\!:\I)^2.$$
\end{proof}

The following re-states Claim \ref{claim-five} for completeness: 
\setcounter{claim}{0}
\begin{claim}[Indistinguishable Trajectories from IMU Data]
Let $g(t)= (R(t), T(t)) \in \SE(3)$ be such that
\begin{equation}
\begin{cases}
\begin{tabular}{>{$}r<{$} >{$\!\!\!\!\!}l<{$}}
\dot R &= R(\hw\imu  - \hw_b) \\
\dot T &= \V \\
\dot \V &= R(\alpha\imu  - \alpha_b) + \gamma
\end{tabular}
\end{cases}
\end{equation}
for some known constant $\gamma$ and functions $\alpha\imu (t)$, $\w\imu (t)$ and for some unknown functions $\alpha_b(t), \w_b(t)$ that are constrained to have $\| \dot \alpha_b(t) \| \le \epsilon$, $\| \dot \w_b(t) \| \le \epsilon$, and $\|\ddot\w_b(t)\|\le\epsilon$ at all $t$,
for some $\epsilon<1$.

Suppose $\gw(t) \doteq \sigma(g_B g(t) g_A)$ for some  constant $g_A = (R_A, T_A)$, $g_B = (R_B, T_B)$, $\sigma > 0$,
with bounds on the configuration space such that
$\|T_A\|\leq M_A$ and $0<m_\sigma\leq|\sigma|\leq M_\sigma$.
Then, under sufficient excitation conditions, 
$\gw(t)$ satisfies  (\ref{eq-model-dyn}) if and only if 
\begin{gather}
\| I - R_A \|  \leq  \frac{2{\epsilon}}{\m(\dot{\w}\imu\!:\!{\RR^+})}  \\
|\sigma - 1|  \le \frac{k_{1}\epsilon + M_\sigma\|I-R_A\|}{\m(\dot\alpha\imu\!:\!{\I_{1}})} \\
\|T_A\|\leq \frac{\epsilon(k_{2}+(2M_\sigma+1)M_A)}{m_{\sigma}\,\m(\ddot\w\imu\!:\!{\I_{2}})} \\
\|(1-R_B^T)\gamma\|\leq\frac{\epsilon(k_{3} + M_\sigma M_A) + (|\sigma-1|+\epsilon)\M(\w\imu-\w_b\!:\!\I_{3})\|\gamma\|}
{m_{\sigma}\,\m(\w\imu-\w_b\!:\!\I_{3})}
\end{gather}
for $\I_i$ and $k_i$ determined by the sufficient excitation conditions.
\end{claim}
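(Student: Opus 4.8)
The plan is to exploit the fact that the mechanization equations \eqref{eq-model-dyn} \emph{define} the bias functions once the pose trajectory is fixed: writing $\gw=\sigma(g_B g g_A)$ in coordinates forces a unique pair of bias functions $\ww_b,\tilde\alpha_b$, and the assertion ``$\gw$ satisfies \eqref{eq-model-dyn}'' is equivalent to the assertion that these induced biases lie in the admissible class, i.e.\ $\|\dot\ww_b\|\le\epsilon$, $\|\ddot\ww_b\|\le\epsilon$, $\|\dot{\tilde\alpha}_b\|\le\epsilon$ at all $t$. So the first step is to compute $\gw$ in coordinates: composing rigid motions and applying the scaling $\sigma$ (which acts only on translations) gives $\Rw=R_B R R_A$ and $\Tw=\sigma(R_B R\,T_A+R_B T+T_B)$. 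Abbreviating the bias-corrected body rate and specific force as $\w\doteq\w\imu-\w_b$ and $a\doteq\alpha\imu-\alpha_b$, so that $\dot R=R\hw$ and $\dot\V=Ra+\gamma$, I differentiate: $\dot\Tw=\sigma(R_B R\hw\,T_A+R_B\V)$ is the induced velocity $\tilde\V$, and $\ddot\Tw=\sigma R_B R(\hw^2+\dot\hw)T_A+\sigma R_B R\,a+\sigma R_B\gamma$.

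Next I solve the two defining equations for the induced biases. From $\dot\Rw=\Rw(\hw\imu-\widehat{\ww}_b)$ and the identity $R_A^T\,\widehat v\,R_A=\widehat{R_A^T v}$, the rotational equation collapses to $\ww_b=(I-R_A^T)\w\imu+R_A^T\w_b$. From $\ddot\Tw=\Rw(\alpha\imu-\tilde\alpha_b)+\gamma$, solving for $\tilde\alpha_b=\alpha\imu-\Rw^{-1}(\ddot\Tw-\gamma)$, substituting $a=\alpha\imu-\alpha_b$, and rewriting $\sigma R_B\gamma-\gamma=(\sigma-1)R_B\gamma+(R_B-I)\gamma$ (so that $R_B^T(R_B-I)=I-R_B^T$) yields
\[
\tilde\alpha_b=(I-\sigma R_A^T)\alpha\imu+\sigma R_A^T\alpha_b-\sigma R_A^T(\hw^2+\dot\hw)T_A-(\sigma-1)R_A^T R^T\gamma-R_A^T R^T(I-R_B^T)\gamma.
\]

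The crux is the third step: differentiate once more (twice for $\ww_b$), using $\tfrac{d}{dt}R^T=-\hw R^T$, and impose the admissibility bounds together with the hypotheses $\|\dot\w_b\|,\|\dot\alpha_b\|,\|\ddot\w_b\|\le\epsilon$. In $\dot\ww_b=(I-R_A^T)\dot\w\imu+R_A^T\dot\w_b$ the first term is isolated at once, giving $\|(I-R_A^T)\dot\w\imu(t)\|\le2\epsilon$ for all $t$ (and analogously $\|(I-R_A^T)\ddot\w\imu(t)\|\le2\epsilon$ from the $\ddot\ww_b$ bound, which the sufficient-excitation hypothesis makes no stronger than \eqref{constraint1}). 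The difficult expression is $\dot{\tilde\alpha}_b$: using $I-\sigma R_A^T=(1-\sigma)I+\sigma(I-R_A^T)$ and $\hw v=\w\times v$ it reads
\[
\dot{\tilde\alpha}_b=(1-\sigma)\dot\alpha\imu+\sigma(I-R_A^T)\dot\alpha\imu+\sigma R_A^T\dot\alpha_b-\sigma R_A^T\bigl(\dot\hw\hw+\hw\dot\hw+\ddot\hw\bigr)T_A+(\sigma-1)R_A^T(\w\times R^T\gamma)+R_A^T\bigl(\w\times R^T(I-R_B^T)\gamma\bigr).
\]
Several of these are of comparable magnitude, so each of \eqref{constraint2}--\eqref{constraint4} must be extracted on its own interval $\I_i$, supplied by the sufficient-excitation hypothesis: on $\I_1$ the term $(1-\sigma)\dot\alpha\imu$ is arranged to dominate; on $\I_2$ the angular velocity $\w$ is small (so every term carrying the factor $\w\times(\cdot)$, in particular both $\gamma$-terms and $\dot\hw\hw+\hw\dot\hw$, is negligible) while the angular jerk $\ddot\w\imu$ is exciting, leaving $-\sigma R_A^T(\ddot\w\imu\times T_A)$ as the dominant term; on $\I_3$ the $\gamma$-terms dominate. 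In each case the remaining terms are moved to the right by the triangle inequality and bounded by $\epsilon$ times constants built from the sup-norms over $\I_i$ of $\w\imu,\alpha\imu$ and their first two derivatives (these are the $k_i$), from $M_\sigma,m_\sigma,M_A$, and from the previously extracted bounds; hence \eqref{constraint1}--\eqref{constraint4} are read off in that order, each feeding the next.

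Finally I convert the sup-bounds into the stated inequalities using the two converse-Cauchy--Schwarz lemmas immediately preceding this claim and the cross-product form $\sup_{t\in\I}\|f(t)\times x\|\ge\m(f\!:\!\I)\|x\|$, legitimate because sufficient excitation keeps the relevant $\m(\cdot\!:\!\I_i)$ strictly positive. The lemma for $A=I-R$, with $\|I-R_A^T\|=\|I-R_A\|$, turns $\sup_t\|(I-R_A^T)\dot\w\imu(t)\|\le2\epsilon$ into \eqref{constraint1}; isolating $(1-\sigma)\dot\alpha\imu$ on $\I_1$ and using $\M\ge\m$ gives \eqref{constraint2}; since $\ddot\hw\,T_A=\ddot\w\imu\times T_A+O(\epsilon\|T_A\|)$ (by $\|\ddot\w_b\|\le\epsilon$) and $|\sigma|\ge m_\sigma$, the cross-product bound on $\I_2$ gives \eqref{constraint3}, with $(2M_\sigma+1)M_A$ collecting the lower-order $T_A$-contributions; and since $\w\times R^T(I-R_B^T)\gamma$ carries the \emph{constant-norm} factor $R^T(t)(I-R_B^T)\gamma$, the same device relative to $\w\imu-\w_b$ gives the first summand of \eqref{constraint4}, while the companion $(\sigma-1)R_A^T(\w\times R^T\gamma)$ produces the $(|\sigma-1|+\epsilon)\M(\w\imu-\w_b\!:\!\I_3)\|\gamma\|$ summand. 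The converse implication --- that \eqref{constraint1}--\eqref{constraint4} are also sufficient for the induced biases to be admissible --- follows by running the same estimates on each $\I_i$ backwards, which is precisely why the $k_i$ and $\I_i$ must be fixed by the sufficient-excitation conditions so that the control is two-sided. I expect the main obstacle to be the bookkeeping of the third step: disentangling, inside $\dot{\tilde\alpha}_b$, the several near-equal-magnitude contributions and certifying that sufficient excitation indeed furnishes an interval on which each one in turn dominates; once that is done, the lemmas and the definition of $\m$ finish the argument mechanically.
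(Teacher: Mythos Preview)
Your proposal is correct and follows essentially the same approach as the paper: compute the induced biases $\ww_b$ and $\tilde\alpha_b$, differentiate, and on each interval $\I_i$ use sufficient-excitation conditions to make all but one term $O(\epsilon)$, then extract the bound via the converse Cauchy--Schwarz lemmas. The only cosmetic difference is that the paper, instead of solving for $\tilde\alpha_b$ explicitly, equates the two expressions for $\ddot{\Tw}$, premultiplies by $R^TR_B^T$, and differentiates the resulting identity $0=\sigma(R^T\ddot R)T_A+(R_A\tilde\alpha_b-\sigma\alpha_b)+(\sigma I-R_A)\alpha\imu+R^T(\sigma I-R_B^T)\gamma$, so that the bias-rate combination $R_A\dot{\tilde\alpha}_b-\sigma\dot\alpha_b$ appears directly as one of four summands (always $\le(1+M_\sigma)\epsilon$), and the paper's sufficient-excitation conditions on each $\I_i$ are stated as explicit smallness assumptions on $\dot R,\ddot R,\dddot R,\ddot T-\gamma$ rather than your equivalent informal description of which term dominates.
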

\setcounter{claim}{6}
\begin{proof}
\begin{description}
 \item
 \item[(\ref{constraint1})] The ambiguous rotation  $\tilde R$ must satisfy $\dot{\tilde R} = \tilde R(\hw\imu - \widehat{\ww}_b)$ for some $\ww_b$:
\begin{align*}
\dot{\tilde R} &= R_B R (\hw\imu  - \hw_b) R_A 
= \tilde R R_A^T(\hw\imu  - \hw_b) R_A = \tilde R (\widehat{R_A^T \w}\imu - \widehat{R_A^T \w}_{b})\\
&= \tilde R (\hw\imu  - \,[\hw\imu  + \widehat{R_A^T \w}\imu - \widehat{R_A^T \w}_{b}]) \nonumber
\end{align*}
where the quantity in brackets is $-\widehat{\ww}_b$, which defines
\begin{equation}
 \ww_{b} \doteq R_A^T\w_b + (I - R_A^T)\,\w\imu. \label{rotdef}
\end{equation}
Taking derivatives and rearranging, 
$$
2\epsilon\geq \|\dot\ww_b-R_A^T\dot\w_b\| = \|(I-R_A^T)\dot\w\imu\|
$$
Since this is true for all $t\in\RR$, we can write
\begin{align*}
2\epsilon&\geq \sup_{t\in\RR}\|(I-R_A^T)\,\dot{\w}\imu(t)\|
\geq \|I-R_A^T\|\,\m(\dot{\w}\imu\!:\!\RR^+).
\end{align*}
This rearranges to give (\ref{constraint1}).

\item[(\ref{constraint2})]
The ambiguous translation $\tilde T$ must satisfy the dynamics in (\ref{eq-model-dyn}):
\begin{align*}
\ddot{\tilde T} &= \dot{\tilde \V}  = {\tilde R}(\alpha\imu - \tilde\alpha_b) + \gamma
= R_B R R_A(\alpha\imu - \tilde\alpha_b) + \gamma.
\end{align*}
Alternatively, working with $\tilde T = \sigma R_B(R T_A + T)$ and applying the dynamics to $T$,
\begin{align*}
\ddot{\tilde T} & = \sigma R_B(\ddot RT_A + \ddot T)
= \sigma R_B(\ddot RT_A + R (\alpha\imu - \alpha_b) + \gamma).
\end{align*}
Taking the difference between these two expressions,
\begin{align*}
0 &= \sigma R_B\ddot R T_A
  + R_BR(R_A\tilde\alpha_b - \sigma\alpha_b)
  + R_BR(\sigma\alpha\imu - R_A\alpha\imu)
  + (\sigma R_B - I)\gamma,
\end{align*}
and multiplying by $R^TR_B^T$,
\begin{align*}
0&= \sigma(R^T\ddot R)T_A
  + (R_A\tilde\alpha_b - \sigma\alpha_b)
  + (\sigma\alpha\imu - R_A\alpha\imu)
  + R^T(\sigma - R_B^T)\gamma\\
&= \sigma((\hw\imu-\hw_b)^2 + (\dot\hw\imu-\dot\hw_b))T_A
  + (R_A\tilde\alpha_b - \sigma\alpha_b)
  + (\sigma\alpha\imu - R_A\alpha\imu)
  + R^T(\sigma - R_B^T)\gamma.
\end{align*}
Differentiating again,
\begin{align}
0 &= \sigma(\dot R^T\ddot R + R^T\dddot R)T_A\label{const1}\\
  &+ ((I-R_A)\sigma + (\sigma-1)R_A)\dot\alpha\imu\label{const2}\\
  &+ \dot R^T((I-R_B^T)\sigma + (\sigma-1)R_B^T)\gamma.\label{const3}\\
  &+ (R_A\dot{\tilde\alpha}_b - \sigma\dot{\alpha}_b)\label{const4}
\end{align}
As a sufficient excitation condition, assume that
$\|\dot R(t)\|\leq\epsilon$, $\|\ddot R(t)\|\leq\epsilon$, and
$\|\dddot R(t)\|\leq\epsilon$, for 
$t\in\I_{1}$.
Under these constraints, (\ref{const2}) is
bounded by $k_{1}\epsilon$, where, e.g. $k_{1} \doteq 2M_\sigma M_A + (2M_\sigma+1)(\|\gamma\|+1)$.
In that case,
\begin{align*}
k_{1}\epsilon &\geq \max_{t\in\I_{1}}\|((I-R_A)\sigma + (\sigma-1)R_A)\dot\alpha\imu(t)\|\\                                                                                        
&\geq |\sigma-1|\m(\dot\alpha\imu\!:\!{\I_{1}}) - M_\sigma\|I-R_A\|.
\end{align*}
This rearranges to give (\ref{constraint2}). 
\item[(\ref{constraint3})]
Now, assume that
$\|\dot R(t)\|\leq\epsilon$, $\|\ddot R(t)\|\leq\epsilon$, and
$\|\ddot T(t) - \gamma\|\leq\epsilon$, for $t\in\I_{2}$.  Under these constraints, $\|\dot\alpha_{\imu}\|\leq2\epsilon$, and
(\ref{const1}) is bounded by $k_{2}\epsilon$, where, e.g. $k_{2}\doteq (2M_\sigma+1)(\|\gamma\| + 3)$.
In that case,
\begin{align*}
 k_{2}\epsilon&\geq\max_{t\in\I_{2}}\|\sigma ((\hw\imu-\hw_b)(\dot\hw\imu-\dot\hw_b) + (\ddot\hw\imu-\ddot\hw_b)) T_A\|\\
 &=\max_{t\in\I_{2}}\|\sigma ((R^T\dot R)(R^T\ddot R - (R^T\dot R)^2) + (\ddot\hw\imu-\ddot\hw_b)) T_A\|\\
 &\geq m_{\sigma}\,\!\max_{t\in\I_{2}}\|\ddot\w\imu(t)\times T_A\| - (2M_\sigma+1) M_A\epsilon\\
 &\geq m_\sigma\,\|T_A\|\,\m(\ddot\w\imu\!:\!{\I_{2}}) - (2M_\sigma+1) M_A\epsilon.
\end{align*}
This rearranges to give (\ref{constraint3}).
\item[(\ref{constraint4})]
Finally, assume that $\|\ddot R(t)\|\leq\epsilon$, $\|\dddot R(t)\|\leq\epsilon$, and $\|\ddot T(t) - \gamma\|\leq\epsilon$
for $t\in\I_{3}$.
As before, $\|\dot\alpha\imu\|\leq2\epsilon$.  Then, (\ref{const1}) + (\ref{const2}) is bounded by $k_{3}\epsilon$, where, e.g. $k_{3}=2M_\sigma+3$.
In that case,
\begin{align*}
k_{3}\epsilon &\geq
   \|\sigma(\dot R^T\ddot R + R^T\dddot R)T_A
  + \dot R^T((I-R_B^T)\sigma + (\sigma-1)R_B^T)\gamma\|\\
&\geq \|\sigma \dot R^T (\ddot R + (I-R^T_B))\gamma\| - M_\sigma M_A\epsilon - |\sigma-1|\,\|\dot R^T\|\,\|\gamma\|\\
&\geq m_{\sigma}\,\|\dot R^T (I-R^T_B)\gamma\| - M_\sigma M_A\epsilon - (|\sigma-1|+\epsilon)\,\|\dot R^T\|\,\|\gamma\|\\
&\geq m_{\sigma}\,\m(\dot R^T\!:\!\I_{3})\|(1-R_B^T)\gamma\|
- \epsilon(k_{3} + M_\sigma M_A) - (|\sigma-1|+\epsilon)\M(\dot R^T\!:\!\I_{3})\|\gamma\|
\end{align*}
This rearranges to give (\ref{constraint4}).
\end{description}
\end{proof}

\section{Initialization and observability of gravity}
\label{app-gravity}

The initialization described above assumes that the initial orientation $R(0)$ is fixed in such a way that, in the body frame at time zero, gravity has the form $[0, \ 0, \| \gamma \|]^T$. In other words, it is assumed that the body frame at time zero coincides with the spatial frame, {\em i.e.}, it is aligned with gravity. To accomplish this, accel measurements are averaged, and the initial condition is chosen to align the average with one ordinate axis. Unfortunately, however, the accel measurements include (yet uncompensated) biases, that are therefore averaged along with gravity, resulting in a misalignment of gravity.

Therefore, the question remains as to whether such a misalignment and the resulting acceleration can be ``absorbed'' by the motion states $(R(t), T(t))$, or whether the error ends up polluting other states, in particular the bias estimates. To answer the question, we write the imaging model relative to the world frame $w$, the body frame at time $t=0$, $b_0$, the body frame at time $t$, $b_t$, and the camera frame $c$ as 
\be
y = \pi\left( g_{c b_t} g_{b_t b_0} g_{b_0 w} p_w \right) \doteq \pi\left( g_a  g^{-1} (t) g_0^{-1} p \right) 
\ee
where we have used the short-hand notation adopted in this appendix on the right-hand side. If the spatial frame is aligned with gravity, we would have $g_0 = e$, but otherwise $g_0$ can be arbitrary (although close to the identity). If we neglect $g_0$, rather than inferring $g(t)$, we would be inferring $\tilde g(t) = g_0 g(t)$, whose translational component $\tilde T$ would produce accelerations different from $T$, via
\be
\ddot{\tilde T} = \tilde R (\alpha - \alpha_b) + R_0 \gamma \doteq  \tilde R (\alpha - \alpha_b) + \gamma + \underbrace{(R_0 - I) \gamma} 
\ee
resulting in an uncompensated bias in brackets, which is then integrated into the motion estimate $\tilde T(t)$, inducing drift. This would be particularly visible when the platform is kept still, so the left-hand side is zero, and the bias has to absorb the residual acceleration:
\be
\tilde \alpha_b = \alpha_b + \tilde R^T(R_0 - I)\gamma.
\ee
Thus, {\em one cannot simply initialize the body pose as if it coincided with the spatial frame} and expect that pose would align, because that causes a component of gravity, dependent on the gyro bias, to be absorbed either in the motion estimates (during motion) or in the biases (during stationary or constant velocity segments). 

Two possible ways to address this issue are to (a) {\em not} assume that $\gamma$ is known, and instead insert it in the state with a pseudo-measurement equation enforcing its norm (thus effectively estimating the direction of gravity in the body frame at time zero as part of the inference process), or (b) to reduce the enforcement of the gauge ambiguity so two degrees of freedom in the rotational component of the pose are allowed to float.

The latter can be addressed by fixing {\em two directions}, rather than three, in the gauge transformation. To see that, recall from previous claims that $R(t)$ is observable up to a rotation by $\theta$ about gravity, and $T(t)$ is observable up to a constant frame. Imposing two directions $y^1, y^2$ to be constant, we obtain
\bea
e^{-\gamma \theta} (\bar y^1 Z^1 - T_0) &=& e^{-\gamma \tilde \theta} (\bar y^1 \tilde Z^1 - \tilde T_0) \\
e^{-\gamma \theta} (\bar y^2 Z^2 - T_0) &=& e^{-\gamma \tilde \theta} (\bar y^2 \tilde Z^2 - \tilde T_0) 
\eea
from which 
\be
e^{-\gamma (\theta-\tilde \theta)} [\bar y^1, \ - \bar y^2] \ba{c} Z^1 \\ Z^2\ea =  [\bar y^1, \ - \bar y^2] \ba{c} \tilde Z^1 \\ \tilde Z^2\ea
\ee
where the right-hand side is any vector on the plane spanned by $\langle  \bar y^1, \bar y^2 \rangle$ and, for the left-hand side to equal the right-hand side with $\tilde \theta \neq \theta$ and $\tilde Z^i \neq Z^i$ we must have that the matrix $e^{-\gamma(\theta  - \tilde \theta)}$ leaves said plane invariant. This can only happen if $\gamma \perp \langle  \bar y^1, \bar y^2 \rangle$.  This can happen, for instance if the two reference directions are chosen along the horizon. Therefore, care must be exercised to avoid this degenerate case. 

If gravity is inserted in the state, the observability analysis conducted in the body of the paper must be amended. Using the results of Claim \ref{claim-thre}, we have that $\tilde g = \sigma(g_B g g_A)$ and therefore the ambiguous acceleration is given by
\be
\ddot{\tilde T} = R_B \ddot R \sigma T_A + R_B \sigma(R(\alpha - \alpha_b) + \gamma) = R_B R R_A (\alpha - \tilde \alpha_b) + \tilde \gamma
\ee
where now $\tilde \gamma$ is allowed to differ from $\gamma$, provided that $\| \tilde \gamma \| = \| \gamma \|$. From the above equation we have
\be
\tilde \gamma =  R_B \sigma \gamma - R_B R \underbrace{(R_A - \sigma I)}_{=0}\underbrace{\alpha}_{\uparrow} + R_B \underbrace{\ddot R}_{\uparrow} \sigma \underbrace{T_A}_{=0} + \sigma R_B R \alpha_b - R_B R R_A \tilde \alpha_b
\ee
where the arrow indicates quantities that can vary arbitrarily under sufficient excitation conditions, and therefore enforce the bracketed quantities to be zero, which yields
\be
\tilde \gamma = R_B \gamma + R_B \underbrace{R}_{\uparrow}\underbrace{(\tilde \alpha_b - \alpha_b)}_{=0} = R_B \gamma
\ee
which enforces $R_B$ to be of the form $R_B = e^{\gamma \theta}$, but otherwise showing that the direction of gravity in the body frame at time zero is observable. Since $\theta$ is unconstrained, as well as $T_B$, we still have to fix four degrees of freedom, for instance two directions not spanning a plane normal to gravity, as done in the previous case. Note that this argument is valid only if the bias $\alpha_b$ is constant; otherwise, a significantly more involved analysis is necessary.

\end{document}